\documentclass[10pt,final,twocolumn]{IEEEtran}
\usepackage[utf8]{inputenc}
\usepackage{textcomp}
\DeclareSymbolFont{tildelow}{TS1}{cmr}{m}{n}
\DeclareMathSymbol{\tildelow}{0}{tildelow}{126}
\usepackage{bbm}
\usepackage{xcolor}
\usepackage{color}
\usepackage{graphicx}
\usepackage{epsfig}
\usepackage{subfigure}
\usepackage{amssymb}
\usepackage{amsbsy}
\usepackage{cite}
\usepackage{amsthm}
\usepackage{romannum}
\usepackage[noend]{algorithmic}
\usepackage{algorithm}













%
\newtheorem{theorem}{Theorem}
\newtheorem{lemma}{Lemma}
\newtheorem{remark}{Remark}
\newtheorem{definition}{Definition}


\newcommand{\beq}{\begin{equation}}
\newcommand{\eeq}{\end{equation}}
\newcommand{\bea}{\begin{array}}
\newcommand{\ena}{\end{array}}
\newcommand{\bds}{\begin {itemize}}
\newcommand{\eds}{\end {itemize}}
\newcommand{\bdf}{\begin{definition}}
\newcommand{\blm}{\begin{lemma}}
\newcommand{\edf}{\end{definition}}
\newcommand{\elm}{\end{lemma}}
\newcommand{\bthm}{\begin{theorem}}
\newcommand{\ethm}{\end{theorem}}
\newcommand{\bprp}{\begin{prop}}
\newcommand{\eprp}{\end{prop}}
\newcommand{\bcl}{\begin{claim}}
\newcommand{\ecl}{\end{claim}}
\newcommand{\bcr}{\begin{coro}}
\newcommand{\ecr}{\end{coro}}
\newcommand{\bquest}{\begin{question}}
\newcommand{\equest}{\end{question}}


\newcommand{\larrow}{{\larrow}}




\def\urltilda{\kern -.15em\lower .7ex\hbox{\~{}}\kern .04em}


\usepackage{float}
\usepackage[T1]{fontenc}
\usepackage{url}
\usepackage{ifthen}
\usepackage{cite}
\usepackage[cmex10]{amsmath} 

\interdisplaylinepenalty=2500 %

\hyphenation{op-tical net-works semi-conduc-tor}


\begin{document}
\title{Asymptotically Optimal Search for a Change Point Anomaly under a Composite Hypothesis Model}

\author{Liad Lea Didi, Tomer Gafni, Kobi Cohen (\emph{Senior Member, IEEE})
    \thanks{Liad Lea Didi, Tomer Gafni, and Kobi Cohen are with the School of Electrical and Computer Engineering, Ben-Gurion University of the Negev, Beer Sheva 8410501 Israel. Email: liadeli@post.bgu.ac.il, gafnito@post.bgu.ac.il, yakovsec@bgu.ac.il}
    \thanks{A short version of this paper that introduces the algorithm, and preliminary simulation results was presented at the annual Allerton Conference on Communication, Control, and Computing (Allerton) 2024 \cite{didi2024active}. In this journal version we include: (i) A detailed development and description of the algorithm; (ii) a deep theoretical analysis of the algorithm with detailed proofs; (iii) more extensive simulation results; and (iv) a detailed discussion of the results, and comprehensive discussion and comparison with the existing literature.}
    \thanks{This work was supported by the Israel Science Foundation under Grant 2640/20.}
	\vspace{-0.75cm}
}
\maketitle
\pagenumbering{arabic}

\begin{abstract}
We address the problem of searching for a change point in an anomalous process among a finite set of $M$ processes. Specifically, we address a composite hypothesis model in which each process generates measurements following a common distribution with an unknown parameter (vector). This parameter belongs to either a normal or abnormal space depending on the current state of the process. Before the change point, all processes, including the anomalous one, are in a normal state; after the change point, the anomalous process transitions to an abnormal state. Our goal is to design a sequential search strategy that minimizes the Bayes risk by balancing sample complexity and detection accuracy.

We propose a deterministic search algorithm with the following notable properties. First, we analytically demonstrate that when the distributions of both normal and abnormal processes are unknown, the algorithm is asymptotically optimal in minimizing the Bayes risk as the error probability approaches zero. In the second setting, where the parameter under the null hypothesis is known, the algorithm achieves asymptotic optimality with improved detection time based on the true normal state. Simulation results are presented to validate the theoretical findings.
\end{abstract}

\begin{IEEEkeywords}
Anomaly detection, change point detection, controlled sensing, active hypothesis testing. \vspace*{-0.2cm}
\end{IEEEkeywords}

\section{Introduction}
\label{sec:introduction}
We consider the problem of detecting a change point that occurs in an anomalous process among $M$ processes. At each time step, the decision maker observes a subset of \(K\) processes (\(1 \leq K \leq M\)). Using terminology from target search, we refer to these processes as cells, with the anomalous process after the change point being the target, located in one of the \(M\) cells. We adopt a composite hypothesis model, where the observations follow a distribution dependent on an unknown parameter (vector). When probing a specific cell, random continuous values are measured, drawn from a common distribution \(f\). This distribution \(f\) has an unknown parameter that belongs to either \(\Theta^{(0)}\) or \(\Theta^{(1)}\), depending on whether the target is absent or present, respectively. Before the change point, all cells are in a benign state, i.e., with parameters in \(\Theta^{(0)}\). After the change point, the distribution of the observations for one of the cells may change, with its parameter shifting to \(\Theta^{(1)}\). We assume the change point is an unknown deterministic parameter. The policy for selecting which subset of cells to probe at each time step must be designed jointly with the change detection algorithm. Therefore, the objective is to design a sequential search strategy that minimizes the Bayes risk, accounting for sample complexity and detection accuracy, by determining both the subset of cells to observe and when to terminate the search and declare the anomalous cell.

\subsection{Main Results}

The problem of searching for a change point anomaly, as considered in this paper, is closely related to both the sequential design of experiments and change point detection. Within the sequential design of experiments, search problems involving a finite set of processes require critical decisions about which processes to sample sequentially at each time step. This process continues until testing concludes, and the anomaly's location is reliably identified. This class of problems has been referred to in recent years as controlled sensing for hypothesis testing \cite{nitinawarat2013controlled} or active hypothesis testing \cite{naghshvar2013active}. These frameworks trace their origins to Chernoff’s seminal work on sequential design of experiments \cite{chernoff1959sequential}. Chernoff primarily addressed binary hypothesis testing, introducing a randomized test strategy, the Chernoff test, which is asymptotically optimal as the error probability approaches zero. Variations and extensions of this framework are discussed in Section \ref{ssec:related}. However, these approaches typically assume a fixed distribution setting, where the anomalous state is present from the outset.

In contrast, the problem addressed in this paper involves a target process that transitions from a benign to an anomalous state at an unknown change point. This aspect aligns the problem with change point detection, where the objective is to identify a change in the distribution as quickly as possible after it occurs, while maintaining a low false alarm rate to avoid premature declarations. The CUSUM algorithm, introduced in \cite{11e3eed7-cdc6-3579-8127-4296922d24e9}, serves as a foundational method in change point detection for single processes, achieving optimality under certain performance criteria. Variants of this problem are also discussed in Section \ref{ssec:related}. Despite these connections, the specific problem of searching for a change point anomaly across multiple processes under a composite hypothesis model considered here has not been analyzed before. Below, we summarize the main contributions of this work. 

First, in terms of algorithm development, We propose a novel algorithm, Searching for Change Point Anomaly (SCPA), to address the problem at hand. The algorithm is deterministic with low-complexity implementations and operates through three distinct phases: Exploration, exploitation, and sequential testing. In the exploration phase, all cells are probed in a round-robin fashion to estimate the unknown parameter. During the exploitation phase, probing focuses on the cell where the change point is most likely to occur. Finally, in the sequential testing phase, this cell is sampled until sufficient confidence is reached to declare it as anomalous. A notable challenge arises from early observations in the pre-change regime, which could delay detection. This is mitigated by strategically resetting the statistic measure. Detailed descriptions of the algorithm are provided in Section \ref{sec: The SCPA Algorithm}.

In terms of theoretical performance analysis, we establish the asymptotic optimality of SCPA as the error probability approaches zero. Traditional change point analysis often restricts the change point $\tau_c$ by assuming prior distributions on $\tau_c$ or adopting weaker performance metrics, such as fixed, non-vanishing error rates. Here, we introduce a novel approach to tackle the problem by setting $\tau_c$ in a manner that ensures robust guarantees for vanishing error probabilities. Specifically, we impose an asymptotic constraint on $\tau_c$, enabling it to be arbitrarily large while ensuring it occurs within a timeframe that does not excessively delay detection. This approach enables strong guarantees on the vanishing of error probabilities, aligning with the classical Chernoff's sequential experimental design \cite{chernoff1959sequential}, and addresses gaps in existing approaches where globally optimal solutions are unavailable under general settings \cite{veeravalli2014quickest, tartakovsky2005asymptotic}. Current modifications often use techniques like increasing curved boundaries \cite{borovkov1999asymptotically} or constraining local false alarm probabilities \cite{tartakovsky2005asymptotic}. In contrast, our analysis provides explicit conditions under which SCPA achieves asymptotic optimality. This setting is particularly applicable to real-world scenarios, such as cyber anomaly detection, where algorithms operate during intervals where anomalies are likely to manifest (e.g., before corrective actions like patch installations are implemented). More generally, it applies to systems operating within time intervals where the focus is on ensuring vanishing error probabilities within periods after the change point. This robust framework contrasts with approaches accepting fixed error rates across the entire time horizon.

We demonstrate the asymptotic optimality of SCPA in two scenarios: (i) When the distributions of normal and anomalous processes are unknown, and (ii) when the parameter value under the null hypothesis is known and identical across all non-anomalous processes. In both cases, our algorithm matches the performance of the active hypothesis testing setting in \cite{hemo2020searching}, which is a special case of our problem where the anomaly is present from the start without temporal change. Finally, we present simulation results that validate our theoretical findings and demonstrate the practical effectiveness of the SCPA algorithm.

\subsection{Related Work}
\label{ssec:related}

As highlighted earlier, our problem is closely related to both sequential experimental design and change point detection. Sequential experimental design, originally introduced by Chernoff \cite{chernoff1959sequential}, has been extensively studied and expanded upon in works such as \cite{naghshvar2013active, nitinawarat2013controlled, cohen2015asymptotically, kaspi2017searching, song2017asymptotically, vaidhiyan2017learning, huang2018active, zhong2019deep, gurevich2019sequential, hemo2020searching, wang2020information, gafni2021searching, tajer2021active, lambez2021anomaly, tsopelakos2022sequential, tsopelakos2023asymptotically, gafni2023anomaly, szostak2024deep}. The specific challenge of anomaly detection across multiple processes has been addressed in \cite{cohen2015active, lambez2021anomaly, huang2018active, gurevich2019sequential, tsopelakos2022sequential, tsopelakos2023asymptotically, gafni2023anomaly, cohen2015asymptotically, song2017asymptotically, hemo2020searching, vaidhiyan2017learning, zhong2019deep}. The detection of abnormal processes over densities with an unknown parameter was explored in \cite{cohen2015asymptotically}, focusing on independent process states across cells. In \cite{cohen2015active}, anomaly detection among $M$ processes was studied, where $K$ processes could be observed at each time step. Observations followed one of two known distributions, depending on whether the process was normal or abnormal. A deterministic search policy was proven asymptotically optimal in minimizing detection time under an error probability constraint. An extension of \cite{cohen2015active} was considered in \cite{hemo2020searching}, where observations followed a common distribution with an unknown parameter, varying based on the process state. A sequential search strategy achieving asymptotic optimality was developed. For Poisson point processes with unknown rates, \cite{vaidhiyan2017learning} established an asymptotically optimal policy for single-location probing, requiring a randomized selection rule and linear exploration time. Non-parametric detection over finite observation spaces was studied in \cite{nitinawarat2015universal}, showing asymptotic optimality with logarithmic exploration time when the null distribution is known. Extensions to M-ary and composite hypothesis testing for single processes were investigated in \cite{schwarz1962asymptotic,lai1988nearly}. In contrast to these anomaly detection frameworks with fixed distribution settings, our setting addresses a dynamic scenario involving a change point where the anomalous process transitions between states. Additional works on sequential detection across independent processes include \cite{5961845, 6482259,6736855, cohen2015asymptotically,7028432,7518609}.

This work also relates to the field of change point detection. Page \cite{page1954continuous} introduced CUSUM charts, which leverage cumulative data for enhanced detection efficiency. Shiryaev \cite{shiryaev1963optimum} developed a Bayesian framework minimizing average detection delay with known geometric priors. Lorden \cite{lorden1971procedures} proposed a minimax framework optimizing worst-case detection delay, with CUSUM achieving asymptotic optimality. Subsequent improvements to Lorden's criterion were presented in \cite{moustakides1986optimal, ritov1990decision}. Pollak \cite{pollak1985optimal} introduced a single-maximization criterion, showing the asymptotic optimality of Shiryaev-Roberts and CUSUM algorithms \cite{lai1988nearly}. Non-Bayesian quickest detection was explored in \cite{bayraktar2015byzantine}, demonstrating CUSUM's optimality. In \cite{10423411}, the problem of detecting distribution changes in random vector sequences was considered, addressing unknown post-change parameters influenced by control actions, requiring both a stopping rule and a sequential control policy. \cite{10129981} addressed streaming data changing from a known to an unknown parametric distribution, combining CUSUM with post-change parameter estimation. Shiryaev \cite{doi:10.1137/1108002} examined change detection in a single process under a Bayesian loss framework, while Lorden \cite{77dcb352-3962-3a93-92e9-f809dbd6d500} and Pollak \cite{5278acc0-9417-3730-988a-4774fe31a3f7} proposed minimax criteria for i.i.d. observations with known pre- and post-change distributions but unknown deterministic change points. Asymptotic performance of Shiryaev's procedures was studied in \cite{doi:10.1137/S0040585X97981202}. Extensions and variations appear in \cite{veeravalli2014quickest, xie2021sequential, 8410438, puzanov2018deep}. Multi-stream settings were addressed in \cite{9410585}, where streams share a known distribution pre-change, and one stream diverges to a different known distribution post-change. \cite{doi:10.1080/07474946.2023.2187417} generalized to exponential distributions with a common pre-change parameter, where one stream shifts to an unknown parameter. However, these studies focused on declaring a change without pinpointing the affected stream.

\section{System Model and Problem Formulation}
\label{sec:problem}

We begin this section by presenting the system model, followed by formulating the objective.

    \subsection{System Model}
    \label{sec: System Model}

We consider the problem of detecting a change point in an anomalous process within a finite set of \( M \) processes, where $M$ is large yet finite. Initially, before the change point, all processes—including the anomalous one—are in a normal state. After the change point, the anomalous process transitions to an abnormal state. Specifically, the system consists of $M$ discrete time stochastic processes $\{X_n^m\}_{n \geq 1}, 1 \leq m \leq M$, with $X_n^m$ taking real values at times $n=1, 2, ...$. The distribution of the anomalous process changes at an unknown deterministic change point \( \tau_c \). If process \( m \) is the anomalous one, we denote this as hypothesis \( H_m \) being true. Drawing an analogy from target search, we often refer to the anomalous process as the target, which can be located in any of the \( M \) cells. 

We focus on a composite hypothesis setting, where the observation distribution depends on an unknown parameter (vector). Let \( \theta_m \) denote the unknown parameter specifying the observation distribution for cell \( m \). When cell \( m \) is observed at time \( n \), an observation \( y_m(n) \) is drawn independently from a common density \( f(y | \theta_m) \), where \( \theta_m \in \Theta \) and \( \Theta \subseteq \mathbb{R} \) represents the parameter space for all cells. We define two disjoint open subsets of \( \Theta \): \( \Theta^{(0)} \) and \( \Theta^{(1)} \), representing the parameter spaces for the non-anomalous distribution and the anomalous distribution, respectively. Specifically, under the non-anomalous distribution, \( \theta_m \in \Theta^{(0)} \), while under the anomalous distribution, \( \theta_m \in \Theta^{(1)} \). Under hypothesis \( H_m \), the observations \( \{y_j(n)\}_{n \geq 1}, \forall j \neq m \), are drawn independently from \( f(y|\theta_j) \) with \( \theta_j \in \Theta^{(0)} \). For cell \( m \), the observations \( y_m(1), y_m(2), \ldots, y_m(\tau_c-1) \) are drawn independently from \( f(y|\theta_m) \) with \( \theta_m \in \Theta^{(0)} \), and the observations \( y_m(\tau_c), y_m(\tau_c+1), \ldots \) are drawn independently from \( f(y|\theta_m) \) with \( \theta_m \in \Theta^{(1)} \). The prior probability that \( H_m \) is true is denoted by \( \pi_m \), where \( \sum_{m=1}^{M} \pi_m = 1 \). To avoid trivial solutions, we assume \( 0 < \pi_m < 1 \) for all \( m \). At each time step, only \( K \) cells (\( 1 \leq K \leq M \)) can be observed. Let \( \mathbf{P}_m \) denote the probability measure under hypothesis \( H_m \), and let \( \mathbf{E}_m \) represent the expectation operator with respect to the measure \( \mathbf{P}_m \).

The stopping time \( \tau \) is defined as the time when the decision-maker concludes the search by declaring the target's location. Let \( \delta \in \{1, 2, \ldots, M\} \) be a decision rule, where \( \delta = m \) indicates that the decision-maker declares \( H_m \) to be true. Define \( \phi(n) \in \{1, 2, \ldots, M\}^K \) as the selection rule specifying the \( K \) cells chosen for observation at time \( n \). The time-series vector of selection rules is denoted by \( \pmb{\phi} = (\phi(n), n = 1, 2, \ldots) \). Let \( \textbf{y}_{\phi(n)}(n) \) represent the vector of observations obtained from the selected cells \( \phi(n) \) at time \( n \), and let \( \textbf{y}(n) = \{\phi(t), \textbf{y}_{\phi(t)}(t)\}_{t=1}^{n} \) represent the collection of all cell selections and corresponding observations up to time \( n \). A deterministic selection rule \( \phi(n) \) at time \( n \) is a mapping from \( \textbf{y}(n-1) \) to \( \{1, 2, \ldots, M\}^K \). Alternatively, a randomized selection rule \( \phi(n) \) maps \( \textbf{y}(n-1) \) to probability mass functions over \( \{1, 2, \ldots, M\}^K \). 

An admissible strategy \( \Gamma \) for the sequential change point anomaly detection problem is characterized by the tuple \( \Gamma = (\tau, \delta, \pmb{\phi}) \). Note that we seek a search strategy that does not rely on knowledge of \( \tau_c \) and assumes no prior knowledge about it.

\subsection{Objective}
\label{sec:Objective}

We begin by defining the error probabilities associated with a sequential decision strategy \( \Gamma \). In classical change point detection involving a single process, the decision rule declares a change has occurred once sufficient evidence is gathered. In this setting, only false alarm events can occur, which correspond to declaring a change before it actually happens. After the change point, all subsequent decisions are correct. The objective in this case is to strike a balance between minimizing the detection delay, i.e., the time elapsed between the actual change point and the declaration time, and minimizing the false alarm probability.

In contrast, for anomaly change point detection involving multiple processes, as considered in this paper, the objective is not only to detect whether a change has occurred but also to identify the anomalous process in which the change occurred. As a result, errors arise from two types of events, depending on whether the change point has occurred at or before the stopping time \( \tau \): false alarms and missed detections. The false alarm probability is the probability of declaring an anomaly in one of the $M$ cells before the change point occurs (when all processes are still in their normal states). Conversely, the missed detection probability is the probability of incorrectly identifying a non-anomalous cell as the target after the change point has occurred (when one process is in an anomalous state, but the decision-maker fails to detect it and instead declares another process as anomalous).

Formally, under strategy $\Gamma$ and hypothesis $H_{m}$, the false alarm probability is given by: 
    \begin{equation}
    \label{eq : FA error probability definition}
        \textbf{P}_{m}(\tau < \tau_c \, | \, \Gamma \, ; \,\tau_c) \, ,
    \end{equation}
and the miss detection probability is given by: 
    \begin{equation}
    \label{eq : MD error probability definition}
        \textbf{P}_{m}(\delta \neq m , \, \tau \geq \tau_c \, | \, \Gamma \, ; \,\tau_c) \, .
    \end{equation}
Hence, the error probability under hypothesis $H_{m}$ is given by: 
    \begin{equation}
    \label{eq : MD error probability definition}
        \alpha_{m} (\Gamma \, ; \,\tau_c) = \textbf{P}_{m}(\delta \neq m , \, \tau \geq \tau_c \, | \, \Gamma \, ; \,\tau_c) + \textbf{P}_{m}(\tau < \tau_c \, | \, \Gamma \, ; \,\tau_c) .
    \end{equation}
Finally, the overall error probability under strategy $\Gamma$ is given by:
    \begin{equation}
    \label{eq : error probability definition}
        P_{e}(\Gamma \, ; \,\tau_c) = \sum _{m=1}^{M} \pi_{m}\alpha_{m} (\Gamma \, ; \,\tau_c) \, .
    \end{equation}
Note that in the special case of $M=1$, i.e., a single anomalous process, missed detection events do not occur. Consequently, the error probability reduces to the false alarm probability (\ref{eq : FA error probability definition}).

Let $\textbf{E}(\left(\tau -\tau_c\right)^{+}|  \Gamma  ;\tau_c ) = \sum _{m=1} ^{M} \pi _{m} \textbf{E}_{m}( \left(\tau - \tau_c\right)^{+} | \Gamma ;\tau_c )$ be the average detection delay under $\Gamma$, where $x^{+} = \max\{x,0\}$. We adopt the Bayes risk framework, as utilized in both active hypothesis testing and change point detection formulations \cite{chernoff1959sequential,cohen2015active, lai1988nearly,wald2004sequential,hemo2020searching, 5961845}, by assigning a cost of $c$ for each observation made after the change point and a loss of $1$ for a wrong declaration. The risk under strategy $\Gamma$ when hypothesis $H_{m}$ is true and change point $\tau_c$, is thus given by:
    \begin{equation}
    \label{eq: Bayes risk}
        R_{m}(\Gamma \, ; \,\tau_c) \triangleq \alpha_{m}(\Gamma \, ; \,\tau_c)+c\: \textbf{E}_{m}\left( \left(\tau - \tau_c\right)^{+} | \Gamma ;\tau_c \right).
    \end{equation}
The average risk is given by:
\begin{equation}
\label{eq: average Bayes risk}
\begin{array}{l}
\displaystyle
R(\Gamma \, ; \,\tau_c ) = \sum_{m=1}^{M} \pi_{m}R_{m}(\Gamma \, ; \,\tau_c)
\vspace{0.1cm}\\ \displaystyle\hspace{2cm}
=P_{e}(\Gamma \, ; \,\tau_c)+c\: \textbf{E}\left( \left(\tau - \tau_c\right)^{+} \, | \, \Gamma \, ;\tau_c \right).
\end{array}    
\end{equation}

The objective is to find a strategy $\Gamma$ that minimizes the Bayes risk $R(\Gamma)$ over all possible realizations of $\tau_c$ without knowledge of its value:
\begin{equation}
    \label{eq:inf R}
        \underset{\Gamma}{\inf} \; R(\Gamma \, ; \,\tau_c).
    \end{equation}

\section{The Search Algorithm for Change Point Anomaly (SCPA)}
\label{sec: The SCPA Algorithm}

In this section we present the Search algorithm for Change Point Anomaly (SCPA)
for solving (\ref{eq:inf R}). To streamline the presentation, we describe the algorithm for \( K = 1 \) (i.e., sampling one cell at each time slot), which is common in dynamic search problems (e.g., \cite{6844162, nitinawarat2015universal,hemo2020searching} and subsequent studies). Later, in Section \ref{ssec:implementation}, we explain its extension to \( K > 1 \). We also assume that the parameter space $\Theta$ is finite, consistent with the assumption made in \cite{chernoff1959sequential} and subsequent studies. Later, in Section \ref{sec:performance}, we will provide asymptotic analysis of the algorithm performance. 

Let us provide below notations used in the algorithm description and throughout the paper. For every $\theta, \varphi \in \Theta$ we define
        \begin{equation}
             \ell_{m}^{(\theta,\varphi)}(t) \triangleq \log \frac{f(y_{m}(t)|\theta)}{f(y_{m}(t)|\varphi)},
        \end{equation}
as the log-likelihood ratio (LLR), testing $\theta$ against $\varphi$ at time $n$. The term 
\begin{equation}
D(\theta||\varphi) \triangleq \textbf{E}_{f(y|\theta)}\left[ \log \frac{f(y|\theta)}{f(y|\varphi)}\right]   
\end{equation}
denotes the Kullback-Leibler (KL) divergence between the two distributions, $f(y|\theta)$ and $f(y|\varphi)$.

\subsection{Description of the Algorithm}

The SCPA algorithm is structured into exploration and exploitation epochs, as described below.\vspace{0.2cm}

    \begin{enumerate}
        \item \textit{Phase 1: Sequential Round-Robin Exploration:} Observe the cells sequentially in a round-robin manner, collecting $y_{m}(n)$ (the observation from cell $m$ at time $n$), and estimate the cell parameter using the last $N$ observations, $\Tilde{\textbf{y}}_{m}\triangleq \{y_{m}(r_1),...,y_{m}(r_N)\}$, through an unconstrained maximum likelihood estimator (MLE)
        \begin{equation}
        \label{eq: MLE estimation in exploration}
            \hat{\theta}_{m}(n) \triangleq \arg \max _{\theta \in \Theta} f(\Tilde{\textbf{y}}_{m}|\theta).
        \end{equation}
        Here, $N$ is a predefined parameter, and its selection will be discussed later in the algorithm description.\\
        If $|\{m : \hat{\theta}_{m}(n) \not\in \Theta^{(0)}\}|=1$ (i.e., there is exactly one cell suspected of having experienced a change point), proceed to Phase 2. Otherwise, return to Phase 1.\vspace{0.2cm}
        \item \textit{Phase 2: Exploitation with Targeted Parameter Estimation:} Let $T$ denote the most recent time at which Phase 1 was exited, and let $\hat{m}(n)=\{m : \hat{\theta}_{m}(n) \not\in \Theta ^{(0)}\}$ represent the index of the single cell whose MLE falls outside $\Theta^{(0)}$. Observe cell $\hat{m}(n)$. Based on the observations $\overline{\textbf{y}}_{\hat{m}(n)}(n) \triangleq \{y_{\hat{m}(n)}(T+1),...,y_{\hat{m}(n)}(n)\}$ estimate the cell parameter using the unconstrained MLE:
        \begin{equation}
        \label{eq: MLE estimation in exploitation}
            \hat{\theta}_{\hat{m}(n)}(n) \triangleq \arg \max _{\theta \in \Theta} f(\overline{\textbf{y}}_{\hat{m}(n)}(n)|\theta).
        \end{equation}
        If $\hat{\theta}_{\hat{m}(n)}(n) \in \Theta^{(0)}$, return to Phase 1. Otherwise, proceed to Phase 3.\vspace{0.2cm}
        \item \textit{Phase 3: Sequential Testing:} Let 
        \begin{equation}
        \label{eq: MLE estimation in sequential testing}
            \hat{\theta}_{m}^{(0)}(n) \triangleq \arg \max _{\theta \in \Theta^{(0)}} f(\overline{\textbf{y}}_{m}(n)|\theta)
        \end{equation}
        be the constrained MLE for cell $m$ to be in a normal state, and
\begin{equation}
\begin{array}{l}
\label{eq: ALLR at sequential testing}
\displaystyle
S_{m}(n) \triangleq  \sum_{t=T+2}^{n} \ell_{m}^{(\hat{\theta}_{m}(t-1),\hat{\theta}_{m}^{(0)}(n))}(t) \vspace{0.2cm}\\ \displaystyle
\hspace{1cm}
=\sum_{t=T+2}^{n} \log \frac{f(y_{m}(t)| (\hat{\theta}_{m}(t-1))}{f(y_{m}(t)| \hat{\theta}_{m}^{(0)}(n))}
\end{array}
\end{equation}
be the sum of the adaptive LLR (SALLR) of cell $m$ at time n used to confirm hypothesis $m$. 

If $ S_{\hat{m}(n)}(n) \geq - \log(c)$ stop the test and declare $\delta=\hat{m}(n)$ as the anomalous cell, otherwise go to Phase 2.
    \end{enumerate}

\subsection{Insights and Discussion on the SCPA Algorithm}
\label{ssec:implementation}    

\subsubsection{Considerations and Implementation Details} Note that the SCPA algorithm is deterministic and transitions between phases based on the current state as determined by the MLE. In the exploration phase, we estimate the cell parameter using only the most recent $N$ observations. The value of $N$ is a predefined parameter, which can be chosen freely. Since we only use a finite number of observations, after the change point, we exclude benign observations from the abnormal cell that occurred before the change point. This approach is particularly beneficial when the change point happens after a long time, as it reduces the need to accumulate too many abnormal observations to counteract the bias from earlier normal ones, thus ensuring that our performance remains unaffected by the timing of the change point. Selecting $N$ too small may result in instability, causing the algorithm to repeatedly enter and exit Phase 1, while a larger $N$ may lead to prolonged stays in Phase 1 after the change point. For simplicity, we chose $N = 1$ in our theoretical and numerical evaluations, though any fixed natural constant can be used theoretically.

In Phases 2 and 3, we avoid using all the normal observations of the anomalous cell to prevent performance degradation, while ensuring enough observations from the target cell are accumulated to confirm the correct hypothesis. Note that in Phase 3, the summation begins at $T+2$ rather than $T+1$, ensuring it relies on observations from $T+1$ to $n$, excluding $T$. This adjustment accounts for the parameter estimation in the numerator and arises from technical considerations necessary to establish the theoretical asymptotic performance presented in the appendix.\vspace{0.1cm}

\subsubsection{Tackling Multi-Process Probing and Detection of Multiple Anomalies} Next, we discuss the generalization of the SCPA algorithm to accommodate the case of $K \geq 1$. The extension works as follows: In Phase 1, at each time step, $K$ cells are sequentially probed in a round-robin fashion in groups of $K$. The transition to Phase 2 occurs under the same condition as before, when exactly one cell is suspected of having an anomaly. In Phase 2, the remaining cells are ranked by their SALLR values, selecting the next $K-1$ cells with the highest likelihood of being anomalous. The suspected cell and these $K-1$ cells are then probed. If the suspected cell’s parameter estimate falls within the normal range, or if any of the $K-1$ cells exhibit parameter estimates outside the normal range (based on observations collected since entering this phase), we return to Phase 1. Otherwise, we proceed to Phase 3. In Phase 3, the algorithm compares the SALLR of the suspected cell with the second-highest SALLR. If the difference is greater than or equal to $-\log c$, the suspected cell is declared abnormal, and the test stops. Otherwise, the process returns to Phase 2.

Next, we address the extension of SCPA to handle multiple (say $L$) anomalies. The implementation of Phase 1 remains unchanged, but the algorithm proceeds to Phase 2 when exactly $L$ cells are suspected of being anomalous. In Phase 2, these $L$ cells are sequentially observed. If any of the $L$ cells no longer appear anomalous based on their parameter estimates, the algorithm returns to Phase 1. Otherwise, after each observation in Phase 2, the SALLR values of the cells are evaluated. When the maximum SALLR exceeds the threshold, the corresponding cell is declared anomalous. The algorithm then continues observing the remaining $L-1$ suspected cells in Phase 2 until all $L$ anomalies are detected.

\subsubsection{Using Generalized LLR} In Phase 3, the sum of the adaptive LLR from equation (\ref{eq: ALLR at sequential testing}) is used to evaluate the suspected cell. Alternatively, the sum of the Generalized LLR (GLLR) can be applied in a similar manner:
\begin{equation}
\begin{array}{l}
\label{eq: GLLR at sequential testing}
\displaystyle
S_{m}(n) \triangleq  \sum_{t=T+2}^{n} \ell_{m}^{(\hat{\theta}_{m}(n),\hat{\theta}_{m}^{(0)}(n))}(t) \vspace{0.2cm}\\ \displaystyle
\hspace{1cm}
=\sum_{t=T+2}^{n} \log \frac{f(y_{m}(t)| (\hat{\theta}_{m}(n))}{f(y_{m}(t)| \hat{\theta}_{m}^{(0)}(n))}.
\end{array}
\end{equation}
The ALLR is advantageous for bounding the error probability effectively using martingale techniques. However, it depends on parameter estimates derived from a limited number of initial observations, which remain fixed and cannot be updated even as additional data is collected. On the other hand, the GLLR \eqref{eq: GLLR at sequential testing} is known to achieve better empirical performance, as it updates parameter estimates with the latest observations. Despite this practical advantage, the GLLR lacks the theoretical performance guarantees provided by the ALLR.

\section{Performance Analysis}
\label{sec:performance}

In this section, we provide an asymptotic analysis of the algorithm's performance. The analysis assumes the setting of $K=1$ and a single anomaly, which are common assumptions in dynamic search problem analysis (e.g., \cite{6844162, nitinawarat2015universal,hemo2020searching} and subsequent studies). We begin in Section \ref{ssec:Anomaly Detection Without Side Information} by examining the case where no additional information about the process states is available—that is, both the pre-change and post-change distribution parameters are unknown. In Section \ref{ssec:Anomaly Detection Under a Known Model of Normality}, we consider the case where the parameter under the null hypothesis is known and identical across all non-anomalous processes.

In traditional change point analysis, deriving optimality properties often involves restricting the occurrence of the change point, $\tau_c$. This is typically done by assuming a prior distribution on $\tau_c$ or adopting weaker performance measures, such as allowing a fixed error rate that does not vanish asymptotically. In contrast, this work introduces an assumption for $\tau_c$ that enables strong guarantees on the vanishing of error probabilities, aligning with the classical frameworks of sequential experimental design by Chernoff \cite{chernoff1959sequential}. Specifically, we impose an asymptotic constraint on $\tau_c$, allowing it to be arbitrarily large while ensuring its occurrence is not excessively delayed relative to the detection time.

This assumption is particularly relevant in practical anomaly detection systems. For example, in cyber anomaly detection, algorithms often operate during suspicious intervals where anomalies, such as malicious or undesired behavior, are expected to manifest before corrective actions (e.g., patch installations) take effect. More generally, this assumption applies to systems operating within time intervals, where the focus is on ensuring vanishing error probabilities within intervals where the change has occurred (i.e., when the assumption on the asymptotic bound on $\tau_c$ holds), rather than accepting fixed error rates over the entire time horizon. This approach provides a robust framework for maintaining strong performance guarantees under realistic operational conditions. The asymptotic constraint on $\tau_c$ assumed throughout the analysis in this section is formally defined as follows:

\noindent
\textbf{Assumption 1:} Let $c$ be the cost per observation defined in \eqref{eq: Bayes risk}. Then, the change point $\tau_c$ is of order $O\left((-\log c)^{1-\delta}\right)$ for some $0<\delta<1$.

This assumption implies that $\tau_c$ grows slower than $-\log c$, specifically, $o\left(-\log c\right)$. Later, we demonstrate that the detection time is of order $O\left(-\log c)\right)$, ensuring that $\tau_c$ does not occur too late relative to the detection time as $c$ approaches zero.

    \subsection{change point Anomaly Search without Side Information}
    \label{ssec:Anomaly Detection Without Side Information}
    We start by analyzing the SCPA algorithm in the setting where the parameters of both the anomalous and non-anomalous distributions are unknown. The following theorem establishes its asymptotic optimality:
    \begin{theorem}
    \label{thorem 1}
        Let $R^{*}$ and $R(\Gamma \, ; \,\tau_c)$ be the Bayes risks under the SCPA algorithm and any other policy $\Gamma$, respectively. Then, the Bayes risk satisfies:\footnote{The notation $f \sim g$ as $c \rightarrow 0$ refers to $\lim_{c \rightarrow 0} f/g=1$.}
            \begin{eqnarray*} \begin{array}{l}\displaystyle R^* \;\sim \; \frac{-c\log c}{D(\theta ^{(1)})}\;\sim \;\inf _{\Gamma }\;{R(\Gamma \, ; \,\tau_c)} \;\;\; \text{as} \;\;\; c\rightarrow 0, \end{array} \end{eqnarray*}
            where $\displaystyle D(\theta ^{(1)}) \triangleq \min\nolimits_{\varphi \in \Theta ^{(0)}} D(\theta ^{(1)}||\varphi)$.
    \begin{proof}

The proof can be found in Appendix \ref{subsection proof of theorem 1}. The asymptotic optimality of the SCPA algorithm relies on several key results demonstrated in the Appendix. First, we establish that the algorithm achieves an error probability of order \( O\left(c \cdot (-\log c)^{1-\delta}\right) \). Second, we show that the post-change exploration time is of order \( O(1) \). The bounded post-change exploration time of the SCPA algorithm is particularly noteworthy, as it contrasts sharply with the logarithmic exploration time commonly observed in many active search strategies (see, for example, \cite{nitinawarat2015universal, cohen2015asymptotically}). Additionally, recall that the asymptotic constraint on \( \tau_c \) results in a pre-change time of order \( o\left(-\log c\right) \). Finally, we demonstrate that the post-change detection time asymptotically satisfies $\sim \frac{-c\log c}{D(\theta^{(1)})}, \text{as } c \to 0.$

Taken together, these results yield the asymptotic Bayes risk stated earlier. Since \( \frac{-c\log c}{D(\theta^{(1)})} \) serves as a lower bound on the Bayes risk of any algorithm, these findings collectively establish the asymptotic optimality of SCPA. For a detailed proof, see Appendix \ref{subsection proof of theorem 1}.
    \end{proof}
    \end{theorem}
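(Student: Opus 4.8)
The plan is to establish the two asymptotic equivalences separately: an \emph{achievability} half showing the SCPA risk satisfies $R^{*}\leq\frac{-c\log c}{D(\theta^{(1)})}(1+o(1))$, and a \emph{converse} half showing $\inf_{\Gamma}R(\Gamma\,;\,\tau_c)\geq\frac{-c\log c}{D(\theta^{(1)})}(1-o(1))$. Throughout I would work from the decomposition $R(\Gamma\,;\,\tau_c)=P_{e}(\Gamma\,;\,\tau_c)+c\,\textbf{E}((\tau-\tau_c)^{+}\,|\,\Gamma\,;\,\tau_c)$ in \eqref{eq: average Bayes risk}, and argue that the observation-cost term dominates while the error term is asymptotically negligible.

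For achievability I would bound the four quantities listed in the proof sketch. First, for the error probability, the crucial structural fact is that the numerator parameter $\hat{\theta}_{m}(t-1)$ in the SALLR \eqref{eq: ALLR at sequential testing} is measurable with respect to the past, so that under the null (cell $m$ normal with true parameter $\varphi\in\Theta^{(0)}$) the product $Z_{n}=\prod_{t=T+2}^{n}f(y_{m}(t)|\hat{\theta}_{m}(t-1))/f(y_{m}(t)|\varphi)$ is a mean-one nonnegative martingale. Since the constrained MLE maximizes the null likelihood, $S_{m}(n)$ is dominated by $\log Z_{n}$, so the stopping event $\{S_{m}(n)\geq-\log c\}$ is essentially contained in $\{\sup_{n}Z_{n}\geq 1/c\}$, whose probability is at most $c$ by Ville's maximal inequality; a union bound over the $M$ cells and over the $O(\tau_c)=O((-\log c)^{1-\delta})$ pre-change episodes then yields the stated $O(c(-\log c)^{1-\delta})$ bound, with the missed-detection term treated identically after the change. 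Second, for the post-change exploration time I would use large-deviations estimates for the finite-$\Theta$ MLE to show that once the change occurs the anomalous cell's estimate leaves $\Theta^{(0)}$ and survives the Phase-2 check within $O(1)$ probes, with geometrically summable probabilities of spurious returns to Phase 1, giving an $O(1)$ expected exploration time. Third, the pre-change contribution to $(\tau-\tau_c)^{+}$ is $o(-\log c)$ directly from Assumption 1. Fourth, for the Phase-3 detection time I would invoke $\hat{\theta}_{m}(t-1)\to\theta^{(1)}$ and $\hat{\theta}_{m}^{(0)}(n)\to\arg\min_{\varphi\in\Theta^{(0)}}D(\theta^{(1)}\|\varphi)$, so each SALLR increment has expectation approaching $D(\theta^{(1)})$; a renewal/Wald argument then gives that the number of samples needed to cross $-\log c$ satisfies $\textbf{E}[\,\cdot\,]\sim\frac{-\log c}{D(\theta^{(1)})}$. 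Combining the four, $\textbf{E}((\tau-\tau_c)^{+})\sim\frac{-\log c}{D(\theta^{(1)})}$ and $P_{e}=o(-c\log c)$, so $R^{*}\sim\frac{-c\log c}{D(\theta^{(1)})}$.

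For the converse I would first discard any policy whose error probability does not vanish, since its risk is bounded below by a positive constant and therefore eventually exceeds $\frac{-c\log c}{D(\theta^{(1)})}\to 0$. For policies with vanishing error I would apply a change-of-measure (Wald-type) argument comparing $H_{m}$ against the alternative in which cell $m$ is normal with parameter $\varphi^{\ast}=\arg\min_{\varphi\in\Theta^{(0)}}D(\theta^{(1)}\|\varphi)$: to separate these hypotheses with error $\alpha$, the expected number of post-change observations from cell $m$ must be at least $\frac{-\log\alpha}{D(\theta^{(1)})}(1-o(1))$, because $D(\theta^{(1)})$ is the largest per-sample discrimination information available. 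Since $\tau_c=o(-\log c)$, the pre-change horizon is negligible and the problem reduces asymptotically to the fixed-distribution anomaly-search lower bound of \cite{hemo2020searching}. Minimizing the resulting bound $R\gtrsim\alpha+c\,\frac{-\log\alpha}{D(\theta^{(1)})}$ over $\alpha$ gives the balancing choice $\alpha^{\ast}\sim c/D(\theta^{(1)})$ and value $\sim\frac{-c\log c}{D(\theta^{(1)})}$, matching the achievable risk and closing the two-sided estimate.

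I expect the main obstacle to lie in the achievability analysis of the error probability and exploration time in the presence of the deterministic, unknown change point. The pre-change benign observations of the anomalous cell bias the statistic, so I must show that restricting the exploration estimate to the last $N$ samples and resetting the SALLR origin at $T$ (summing from $T+2$ rather than $T+1$) simultaneously keep the false-alarm contribution at $o(-c\log c)$ and keep the post-change exploration time $O(1)$ instead of logarithmic. The boundary term created by the $T+2$ convention in the martingale domination $S_{m}(n)\leq\log Z_{n}$ must be controlled carefully here. Reconciling these competing requirements---a reset aggressive enough to forget pre-change data yet stable enough to avoid repeated re-entry into Phase 1---while securing the \emph{exact} constant $D(\theta^{(1)})$ in both the delay upper bound and the converse, is the crux of the argument.
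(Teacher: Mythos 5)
Your proposal is correct and follows essentially the same skeleton as the paper's proof: decompose the risk into error plus delay; bound the error by $O(c\cdot(-\log c)^{1-\delta})$ via exactly your martingale argument (the paper invokes Lemma 1 of \cite{robbins1972class}, i.e.\ Ville's inequality, using the past-measurability of $\hat{\theta}_m(t-1)$ and the domination of $S_j(\tau)$ by the LLR with the true normal parameter in the denominator, with the union over restart times $T\leq\tau_c$ costing the factor $\tau_c=O((-\log c)^{1-\delta})$, and the missed-detection term handled by conditioning on $\{T=r\}$, which depends only on samples up to $r$); bound the post-change exploration time by $O(1)$ through Chernoff bounds on the finite-$\Theta$ MLE (the paper's Lemma 1, tail $Ce^{-\gamma\sqrt{n}}$); use Assumption 1 for the $o(-\log c)$ pre-change term; and close with a matching lower bound. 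Two local differences deserve comment. For the Phase-3 delay you propose a renewal/Wald argument based on $\hat{\theta}_m^{(0)}(n)\rightarrow\arg\min_{\varphi\in\Theta^{(0)}}D(\theta^{(1)}\|\varphi)$; this is delicate because the denominator parameter is re-estimated from the entire block at every $n$, so the SALLR increments are not i.i.d. The paper sidesteps MLE consistency entirely: since $S_m(n)=\min_{\varphi\in\Theta^{(0)}}\sum_{t}\ell_m^{(\hat{\theta}_m(t-1),\varphi)}(t)$, the crossing time of the minimized statistic is $n_U=\max_{\varphi\in\Theta^{(0)}}n_U(\varphi)$, and a union bound over the finite set $\Theta^{(0)}$ applied to the per-$\varphi$ crossing times (whose increments are i.i.d.\ with mean $D(\theta^{(1)}\|\varphi)\geq D(\theta^{(1)})$ once $n>\tau_{EST}$, the pre-$\tau_{EST}$ terms being controlled by the $n_{EST}$ tails) yields exponential tails for all $n>-(1+\epsilon)\log c/D(\theta^{(1)})$ and hence the delay bound; your route could be repaired but would need an additional estimate on how long the constrained MLE takes to settle at $\varphi^{*}$. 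For the converse, the paper does not re-derive the change-of-measure bound: it simply cites the Bayes-risk lower bound of \cite{cohen2015active} for any algorithm, which is precisely what your Wald-type sketch (optimize $\alpha+c(-\log\alpha)/D(\theta^{(1)})$ over $\alpha$) would establish, and your reduction via $\tau_c=o(-\log c)$ to the fixed-distribution setting of \cite{hemo2020searching} is the same justification the paper relies on; re-deriving it buys self-containedness at the cost of length, while the citation buys brevity.
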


    \subsection{change point Anomaly Search Under a Known Model of Normality}
    \label{ssec:Anomaly Detection Under a Known Model of Normality}
Here, we assume that the parameter under the null hypothesis, $\theta = \theta^{(0)} \in \Theta^{(0)}$, is known and equal for the non-anomalous distribution across all cells. This setup models many anomaly detection scenarios, where the decision maker can infer the pre-change distribution parameters, which represent the typical distribution before the change, while there is uncertainty regarding the distribution after the change point (i.e., under abnormal conditions). To utilize this information, we substitute the estimation of the parameter $\theta^{(0)}$ in the SALLR statistics (\ref{eq: ALLR at sequential testing}) with its known true value: 
    \begin{equation}
    \begin{array}{l}
        \label{eq: ALLR with side info}
        \displaystyle S_{m}(n) \triangleq \sum_{t=T+2}^{n} \log{\frac{f(y_{m}(t)|\hat{\theta}_{m}(t-1))}{f(y_{m}(t)|\theta^{(0)})}}.
    \end{array}
    \end{equation}
The following theorem establishes the asymptotic optimality of SCPA algorithm in this setting.
    \begin{theorem}
    \label{thorem 2}
         Let $R^{*}$ and $R(\Gamma \, ; \,\tau_c)$ be the Bayes risks under the SCPA algorithm and any other policy $\Gamma$, respectively. Then, the Bayes risk satisfies:
\begin{equation} 
\begin{array}{l}\displaystyle R^* \;\sim \; \frac{-c\log c}{D(\theta ^{(1)}||\theta^{(0)})}\;\sim \;\inf _{\Gamma }\;{R(\Gamma \, ; \,\tau_c)} \;\;\; \text{as} \;\;\; c\rightarrow 0.
\end{array} 
\end{equation}
    \end{theorem}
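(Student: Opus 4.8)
The plan is to mirror the proof of Theorem \ref{thorem 1}, exploiting the fact that replacing the constrained estimate $\hat{\theta}_m^{(0)}(n)$ by the known constant $\theta^{(0)}$ in the statistic \eqref{eq: ALLR with side info} both simplifies the error analysis and changes the relevant drift of the test statistic from $D(\theta^{(1)})=\min_{\varphi\in\Theta^{(0)}}D(\theta^{(1)}||\varphi)$ to $D(\theta^{(1)}||\theta^{(0)})$. As before, I would split the argument into an achievability part, showing that SCPA attains $R^{*}\sim -c\log c/D(\theta^{(1)}||\theta^{(0)})$, and a converse part, showing that no admissible strategy can do asymptotically better. Since $\theta^{(0)}\in\Theta^{(0)}$ we always have $D(\theta^{(1)}||\theta^{(0)})\ge D(\theta^{(1)})$, so the known-null detection time is no larger than in Theorem \ref{thorem 1}, which is the qualitative improvement the theorem asserts.

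For achievability I would control the three contributions to the risk separately. First, the error probability: under the null the summand $\ell_{m}^{(\hat{\theta}_m(t-1),\theta^{(0)})}(t)$ uses a numerator estimate $\hat{\theta}_m(t-1)$ that is measurable with respect to the past, so that $\mathbf{E}_{\theta^{(0)}}[\exp(\ell_{m}^{(\hat\theta_m(t-1),\theta^{(0)})}(t))\mid\mathcal{F}_{t-1}]=\int f(y|\hat\theta_m(t-1))\,dy=1$ and hence $\{\exp(S_m(n))\}$ is a unit-mean martingale (this is exactly the advantage of the adaptive LLR noted after \eqref{eq: GLLR at sequential testing}, and it is cleaner here because no uniform control over the estimated null is needed). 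A Ville/maximal inequality then bounds the false-alarm probability of each Phase 3 excursion by $e^{\log c}=c$; since Assumption 1 caps the pre-change horizon at $O((-\log c)^{1-\delta})$, the number of excursions before $\tau_c$ is of the same order, yielding an overall error probability $O(c\,(-\log c)^{1-\delta})=o(-c\log c)$, which is negligible relative to the target. Second, the post-change exploration time: because the MLE concentrates, once the anomalous cell switches to $\theta^{(1)}$ it is flagged and driven through Phases 2--3 in $O(1)$ expected time exactly as in Theorem \ref{thorem 1}, and combined with the pre-change time $o(-\log c)$ these contribute negligibly to $\mathbf{E}[(\tau-\tau_c)^{+}]$.

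The third and dominant contribution is the Phase 3 detection time. Conditioned on sampling the true anomalous cell after $\tau_c$, one has $\hat\theta_m(t-1)\to\theta^{(1)}$, so $S_m(n)$ acquires asymptotic drift $D(\theta^{(1)}||\theta^{(0)})$ per sample; a renewal/Wald argument at the boundary $-\log c$ gives expected detection time $\sim(-\log c)/D(\theta^{(1)}||\theta^{(0)})$, whence $c\,\mathbf{E}[(\tau-\tau_c)^{+}]\sim -c\log c/D(\theta^{(1)}||\theta^{(0)})$ and the upper bound on $R^{*}$ follows. For the converse I would use a genie/reduction argument: revealing $\tau_c$ can only decrease the attainable risk, and the revealed problem is precisely the active hypothesis testing problem with the anomaly present from time $\tau_c$ and known null $\theta^{(0)}$, i.e., the setting of \cite{hemo2020searching}; the matching lower bound $\inf_\Gamma R(\Gamma;\tau_c)\ge -c\log c/D(\theta^{(1)}||\theta^{(0)})\,(1+o(1))$ then follows from the Chernoff-type information lower bound on the expected number of post-change samples required to separate $\theta^{(1)}$ from the known $\theta^{(0)}$.

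Combining the two bounds yields the claimed asymptotic equivalence. The main obstacle I anticipate is the error-probability step: carefully handling the repeated resetting of the statistic at the successive exit times $T$, so that the martingale property and the $O(1)$ post-change exploration bound hold \emph{uniformly} across excursions and are not corrupted by the pre-change observations retained in each finite window. This is exactly where the choice of starting index $T+2$ rather than $T+1$ and the finite ($N$-sample) estimation window become essential, and it is the part of the Theorem \ref{thorem 1} machinery that must be re-verified, rather than merely quoted, in the known-$\theta^{(0)}$ case.
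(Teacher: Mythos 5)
Your proposal tracks the paper's proof closely in its overall architecture: the unit-mean adaptive-LLR martingale with the Ville/Robbins--Siegmund maximal inequality \cite{robbins1972class}, the $O(1)$ expected post-change exploration time via Chernoff-type concentration of the MLE (the paper's Lemma 1, giving $\mathbf{P}_m(n_{EST}>n)\leq Ce^{-\gamma\sqrt{n}}$), the drift analysis once the estimates settle at $\theta^{(1)}$ yielding delay $\sim -\log c/D(\theta^{(1)}||\theta^{(0)})$, and a converse imported from the fixed-anomaly active testing setting (the paper quotes the lower bound of \cite{cohen2015active}; your genie reduction to \cite{hemo2020searching} is the same substance). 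Two cosmetic notes: the paper actually proves Theorem \ref{thorem 2} first and derives Theorem \ref{thorem 1} from it, consistent with your observation that the known-$\theta^{(0)}$ statistic is the cleaner one; and for the delay the paper does not use a Wald/renewal identity but a Chernoff tail bound on $n_U$ (its Lemma 2), which is what lets it dispatch the pre-$\tau_{EST}$ portion of the sum starting at $T+2$ by a case analysis on whether the last Phase-1 exit $T$ precedes $\tau_c$ --- the very complication you correctly flag at the end.

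There is, however, one genuine gap: your error-probability step controls only false alarms. The error under $H_m$ has two components, $\mathbf{P}_m(\tau<\tau_c)$ and the miss-detection term $\mathbf{P}_m(\delta\neq m,\,\tau\geq\tau_c)$, and your union bound over ``excursions before $\tau_c$,'' of cardinality $O((-\log c)^{1-\delta})$ by Assumption 1, addresses only the first. Under $H_m$ a wrong cell $j\neq m$ remains null forever, so it can initiate a Phase-3 excursion at any reset time $T=r>\tau_c$, and there are countably infinitely many such start times; the naive union bound $\sum_{r>\tau_c}c$ diverges. The paper closes this (its Lemma 3) by conditioning on the event $\{T=r\}$ --- which depends only on samples up to time $r$ and is therefore independent of the post-$r$ observations entering the martingale $Z_r$ --- applying the per-excursion bound $c$ conditionally, and summing $\sum_{r>\tau_c}c\,\mathbf{P}_m(T=r)$. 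This sum converges precisely because $T\leq\tau_{EST}$ and Lemma 1 supplies the exponential tail, giving $\alpha_{m,j}^{MD}\leq\sum_{n\geq 0}c\,\mathbf{P}_m(n_{EST}>n)=O(c)$. So the tail bound on the exploration time is needed twice: once for the detection delay, as you use it, and once --- which your sketch omits --- to make the miss-detection union bound summable. Without this ingredient the claimed overall error bound $O(c\,(-\log c)^{1-\delta})$ does not follow.
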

    \begin{proof}
        The proof is given in Appendix \ref{subsection proof of theorem 2}.
    \end{proof}

We note that having knowledge of the parameter for the non-anomalous distribution enhances the algorithm's performance. This improvement is evident from the fact that $D(\theta ^{(1)}||\theta^{(0)}) > D(\theta^{(1)})$, which implies that the Bayes risk stated in Theorem \ref{thorem 2} is smaller than the risk presented in Theorem \ref{thorem 1}.  

\section{Numerical Examples}

We validate the theoretical findings of SCPA through numerical examples. In our simulations, we modeled a single anomaly occurring in one of the \( M \) cells after the change point, with the following setup: The a priori probability of the target being present in cell \( m \) was set to \( \pi_m = 1/M \) for all \( 1 \leq m \leq M \). When cell \( m \) was observed at time \( n \), an observation \( y_m(n) \) was independently drawn from either the distribution \( \exp(\theta^{(0)}) \) or \( \exp(\theta^{(1)}) \), depending on whether the target was absent or present, respectively.

We examine the cases for both \(\tau_c = 0\) and \(\tau_c = 70\), with the results shown in Fig.~\ref{fig:avg_delay_vs_minus_logc_tau_c_0} and Fig.~\ref{fig:avg_delay_vs_minus_logc_tau_c_70}, respectively, for the average detection delay, and in Fig.~\ref{fig:avg_delay_vs_p_error_tau_c_0} and Fig.~\ref{fig:avg_delay_vs_p_error_tau_c_70}, respectively, for the log error probability. The results indicate similar outcomes for both realizations of the change point, as it does not influence the detection time. This observation is consistent with our theoretical findings. Additionally, we observe a linear relationship between the detection time and \(-\log c\), which further aligns with the theoretical findings.

\vspace{-0.5cm}
\begin{figure}[h]
    \centering
    \includegraphics[scale=0.4]{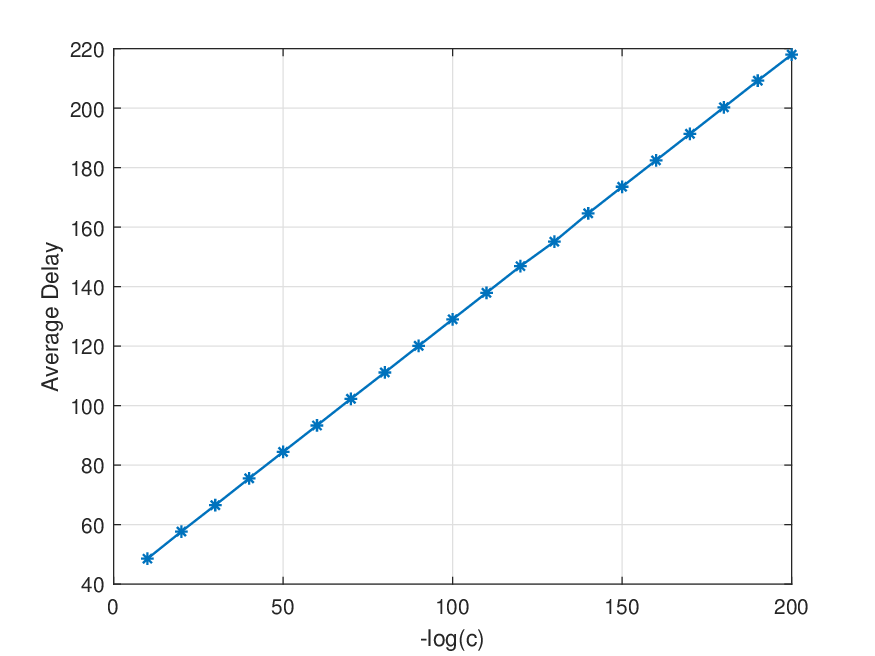}
    \caption{The average detection delay as a function of $-\log c$, with $M=5$, $\Theta^{(0)}= \{0.1N , 1\leq N \leq 10  \}$ , $\Theta^{(1)}= \{N , 2\leq N \leq 10  \}$ and $\tau_c=0$.} \label{fig:avg_delay_vs_minus_logc_tau_c_0}
\end{figure}

\begin{figure}[h]
    \centering
    \includegraphics[scale=0.4]{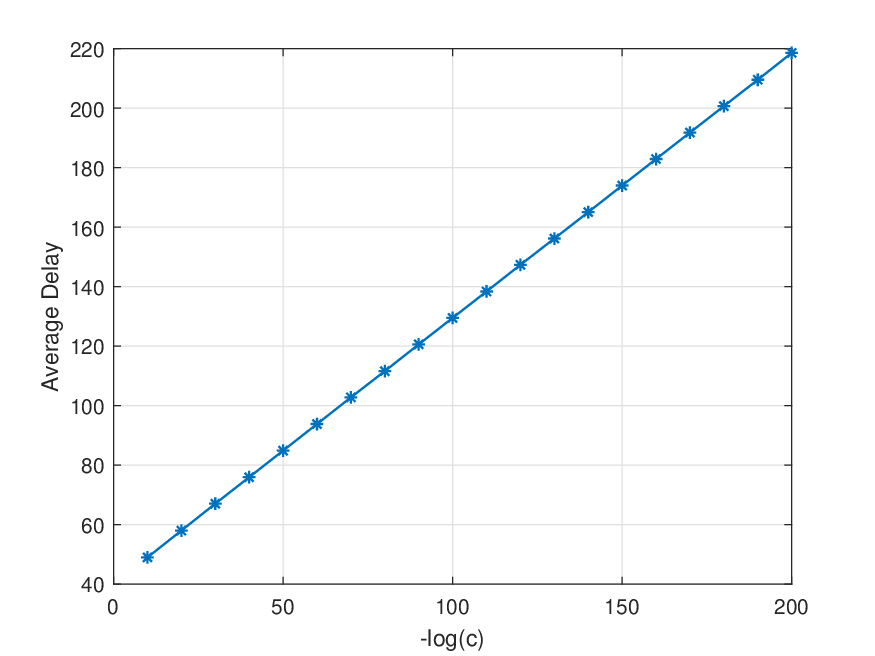}
    \caption{The average detection delay as a function of $-\log c$, with $M=5$, $\Theta^{(0)}= \{0.1N , 1\leq N \leq 10  \}$ , $\Theta^{(1)}= \{N , 2\leq N \leq 10  \}$ and $\tau_c=70$.} \label{fig:avg_delay_vs_minus_logc_tau_c_70}
\end{figure}

\begin{figure}[h]
    \centering
    \includegraphics[scale=0.4]{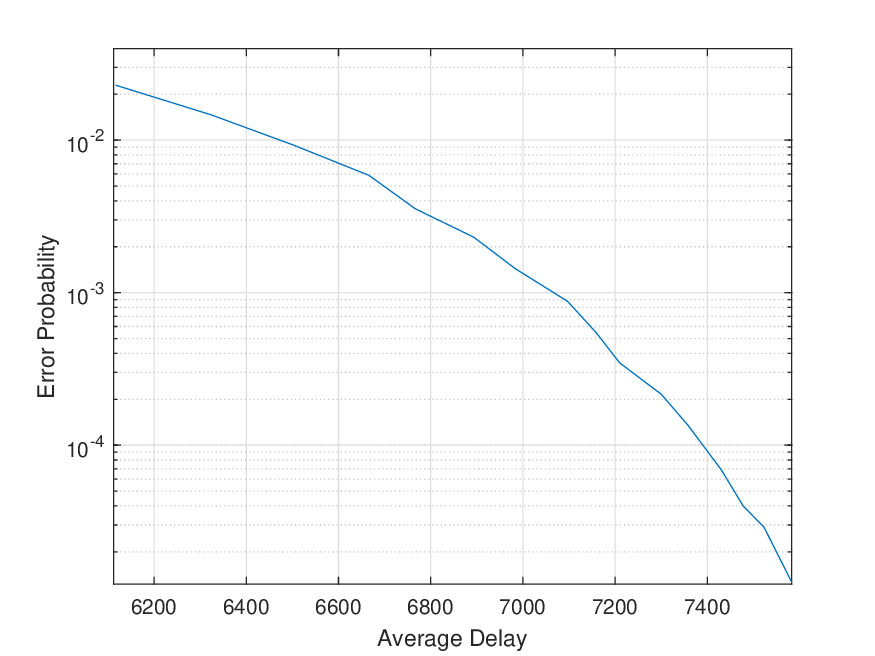}
    \caption{{The error probability as a function of the average detection delay, with $M=5$, $\Theta^{(0)}= \{1+0.1N , 0\leq N \leq 10  \}$ , $\Theta^{(1)}= \{0.5+0.1N , 0\leq N \leq 4  \} \cup \{2.1+0.1N , 0\leq N \leq 4  \}$ and $\tau_c=0$.}} \label{fig:avg_delay_vs_p_error_tau_c_0}
\end{figure}

\begin{figure}[h]
    \centering
    \includegraphics[scale=0.4]{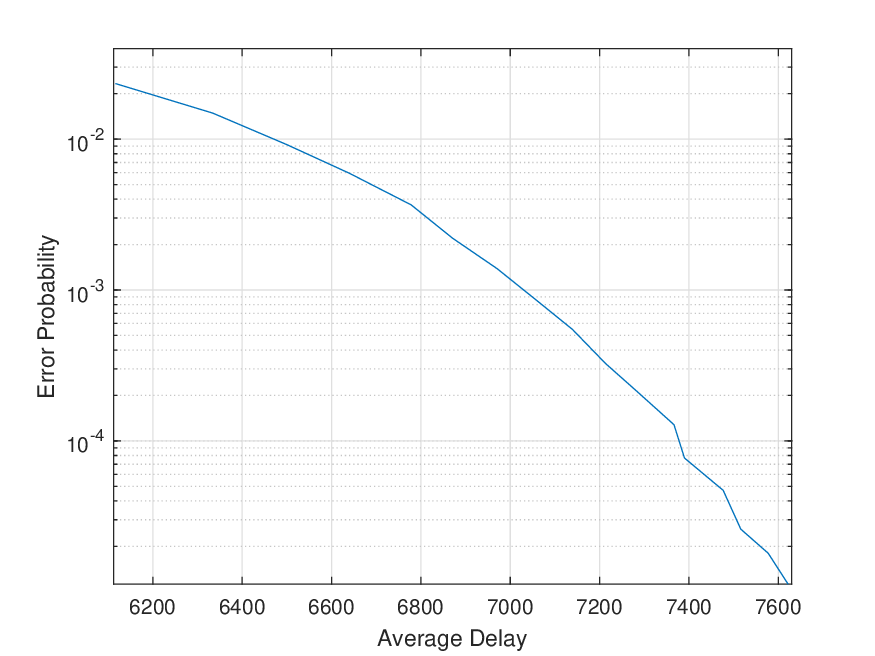}
    \caption{The error probability as a function of the average detection delay, with $M=5$, $\Theta^{(0)}= \{1+0.1N , 1\leq N \leq 10  \}$ , $\Theta^{(1)}= \{0.5+0.1N , 0\leq N \leq 4  \} \cup \{2.1+0.1N , 0\leq N \leq 4  \}$ and $\tau_c=70$.} \label{fig:avg_delay_vs_p_error_tau_c_70}
\end{figure}

We proceed to compare the proposed SCPA algorithm with the CUSUM algorithm, both used for change point detection, with SALLR based on the closest abnormal and normal parameters. At each step, if the SALLR is negative, the next cell is probed. Otherwise, we determine whether to stop and declare the cell anomalous (if the SALLR exceeds \(-\log c\)) or to probe the cell again and repeat the process. The SCPA algorithm was evaluated in two scenarios: without side information and with side information, as detailed in Sections~\ref{ssec:Anomaly Detection Without Side Information} and~\ref{ssec:Anomaly Detection Under a Known Model of Normality}, respectively. The results are presented in Fig.~\ref{fig:comprasion_with_CUSUM} and
Fig.~\ref{fig:second comprasion_with_CUSUM}. In both simulations with side information, the SCPA algorithm demonstrated a clear advantage, achieving a smaller error probability and a shorter detection time from the start of the measurements. Without side information, CUSUM achieved faster detection times for higher error probability values. However, asymptotically, as the error probability decreased, the SCPA algorithm without side information exhibited superior performance.

\begin{figure}[h]
    \centering
    \includegraphics[scale=0.4]{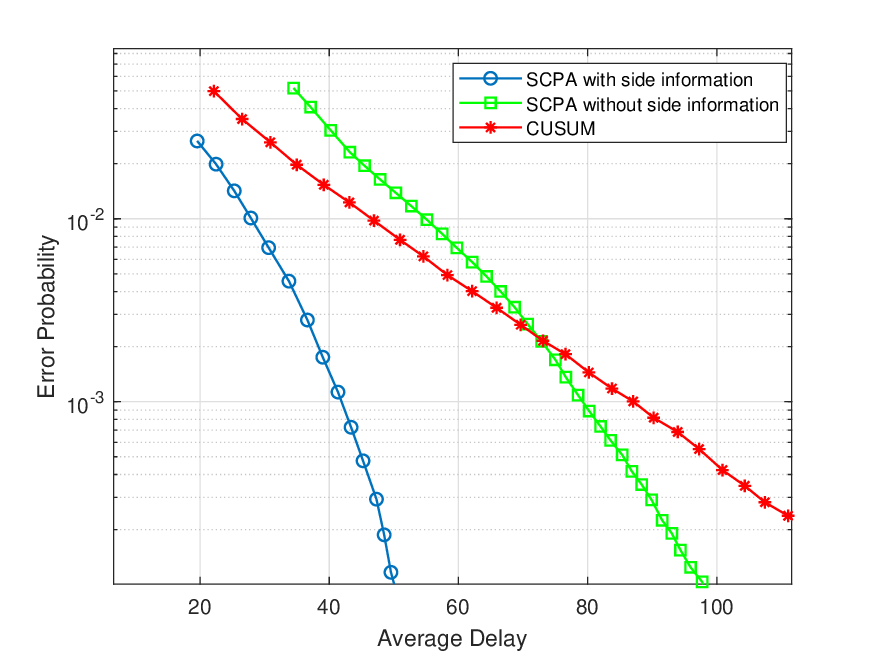}
    \caption{The error probability as a function of the average detection delay under the SCPA algorithm with side information, the SCPA algorithm without side information and the CUSUM algorithm, with $M=4$, $\Theta^{(0)}= \{0.1N , 1\leq N \leq 9  \}$ , $\Theta^{(1)}= \{N , 1\leq N \leq 30 \}$ and $\tau_c=20$.} \label{fig:comprasion_with_CUSUM}
\end{figure}

\begin{figure}[h]
    \centering
    \includegraphics[scale=0.4]{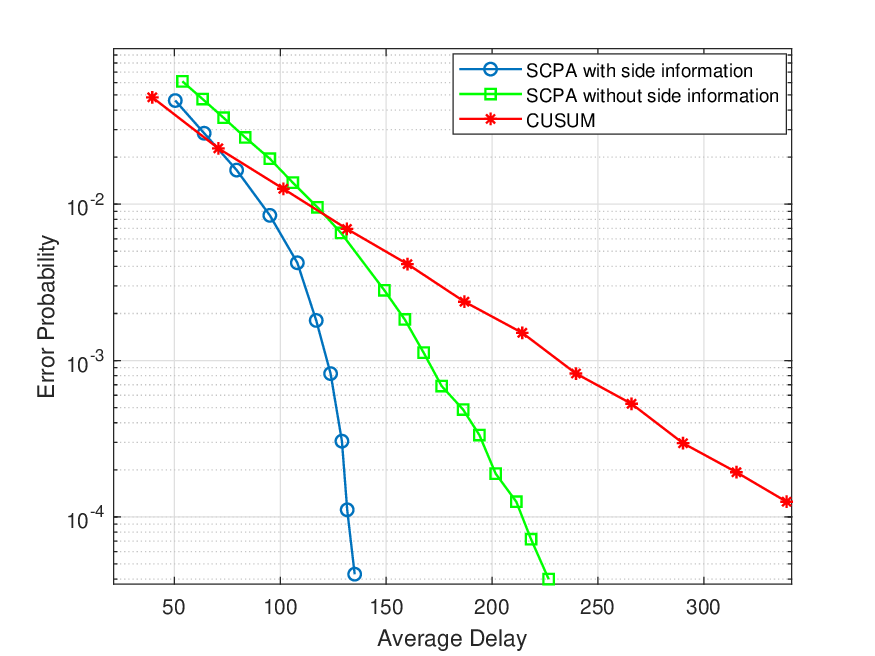}
    \caption{The error probability as a function of the average detection delay under the SCPA algorithm with side information, the SCPA algorithm without side information and the CUSUM algorithm, with $M=7$, $\Theta^{(0)}= \{10 + 10N , 1\leq N \leq 9  \}$ , $\Theta^{(1)}= \{0.1 + 0.5N , 0\leq N \leq 37 \}$ and $\tau_c=0$.} \label{fig:second comprasion_with_CUSUM}
\end{figure}

\section{Conclusion}

We addressed the anomaly detection problem in a set of \( M \) processes, where only a subset of cells can be probed at each step. Observations depend on unknown cell parameters, classified as normal or abnormal. Before the change point, all cells are normal; after the change point, one cell transitions to the abnormal state. Our goal was to develop a sequential strategy minimizing detection time under an error probability constraint. We analyzed two scenarios: without prior information and with known normal parameters. We proved asymptotic optimality in both cases and showed improved detection times with additional information. Simulations validated the theoretical guarantees and efficiency of the proposed SCPA algorithm.

\section{Appendix}
For clarity, we begin by proving Theorem \ref{thorem 2}, followed by highlighting the key steps required to extend the results to the context of Theorem \ref{thorem 1}.

\subsection{Proof of Theorem \ref{thorem 2}}
\label{subsection proof of theorem 2}

To prove the theorem, we introduce the following definitions:

\begin{definition}
Let $\tau_{EST}$ be the minimal integer such that $\tau_{EST}\geq \tau_c$ and for all $n \geq \tau_{EST}$, the parameter estimates satisfy: $\hat{\theta}_{m}(n) = \theta ^{(1)}$, $\hat{\theta}_{j}(n)=\theta^{(0)}$ for all $j \neq m$ under hypothesis $H_{m}$.
\end{definition}

\begin{definition}
$n_{EST} \triangleq \tau_{EST} - \tau_c$ denotes the total amount of time between $\tau_{EST}$ and the change point, $\tau_c$.
\end{definition}

\begin{remark}
Later in the proof, we will show that the expected time spent during the exploration phase is \(O(1)\). Note that for all \(n > \tau_{EST}\), we probe only cell \(m\), which occurs during the exploitation phase. Therefore, the time spent in the exploration phase after the change point is bounded by \(n_{EST}\). As a result, to show that the expected exploration phase time is \(O(1)\), it is sufficient to prove that the expectation of $n_{EST}$ is \(O(1)\). The detailed proof is provided rigorously later.
\end{remark}

\begin{remark}
In the following lemma, when we say that SCPA is implemented indefinitely, we mean that the algorithm operates without stopping, disregarding the stopping rule.
\end{remark}

\begin{lemma}
\label{lemma 1}
Assume that SCPA is implemented indefinitely. Then, there exist $C>0$ and $\gamma>0$ such that
\begin{equation}
\textbf{P}_{m} (n_{EST}>n) \leq Ce^{-\gamma \sqrt{n}}.
\end{equation}
\end{lemma}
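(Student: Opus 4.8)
The plan is to reduce the stabilization time $n_{EST}$ to the concentration of finitely many maximum-likelihood estimates, and then to control the probability that any of these estimates is wrong at \emph{any} time after $\tau_c+n$ by a large-deviations (Chernoff) argument. Since $\Theta$ is finite, under $H_m$ the event $\{\hat\theta_m(n')=\theta^{(1)}\}$ is exactly the intersection, over the finitely many competitors $\varphi\in\Theta\setminus\{\theta^{(1)}\}$, of the events that the cumulative log-likelihood ratio $\sum_t \ell_m^{(\theta^{(1)},\varphi)}(t)$ over the current observation window is strictly positive; likewise $\{\hat\theta_j(n')=\theta^{(0)}\}$ for $j\neq m$. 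Each such cumulative LLR is a sum of i.i.d.\ terms whose mean is the KL divergence $D(\theta^{(1)}||\varphi)>0$ (respectively $D(\theta^{(0)}||\varphi)>0$), because the window is drawn from the true post-change (respectively pre-change) law. A Chernoff bound then yields a constant $c_0>0$ with $\textbf{P}_m(\text{sum of } k \text{ terms}\le 0)\le e^{-c_0 k}$, so the probability that a given estimate is incorrect once its window holds $k$ genuine samples is at most $|\Theta|\,e^{-c_0 k}$.

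Next I would account for the number of post-change samples feeding the relevant estimate. The round-robin schedule of Phase~1 is deterministic and samples each cell at rate $1/M$, while once a cell is (correctly) flagged the exploitation phases sample it at every slot; hence by time $\tau_c+n$ the anomalous cell has accumulated $\Omega(n)$ post-change observations regardless of how time splits between the phases. The finitely many pre-change samples that may survive inside a window are at most $\tau_c$ in number, so for fixed $\tau_c$ they contribute only a bounded bias that is overwhelmed once enough post-change samples are present; this is precisely the role of the window-reset rule (restarting the sums at $T+1$ and, in Phase~3, at $T+2$).

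To pass from a single-time statement to the ``for all $n'\ge\tau_{EST}$'' requirement in the definition of $\tau_{EST}$, I would write $\{n_{EST}>n\}$ as a union of (a) the event that the current exploitation window at $\tau_c+n$ contains fewer than $\sqrt{n}$ post-change samples, and (b) the event that some estimate is incorrect, or a reversion to Phase~1 is triggered, at some slot $n'\ge\tau_c+n$. For $n$ large enough that $n/M>\sqrt{n}$, event~(a) forces a late last-exit from Phase~1, whose probability decays exponentially; event~(b) is handled by a union bound over all future slots $n'$, each term bounded by $|\Theta|\,e^{-c_0 k(n')}$ with $k(n')\ge\sqrt{n}$, whose geometric sum is $O(e^{-\gamma\sqrt{n}})$. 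Collecting terms gives $\textbf{P}_m(n_{EST}>n)\le Ce^{-\gamma\sqrt{n}}$; the $\sqrt{n}$ rate is deliberately loose but more than sufficient, since all that is needed downstream is $\sum_n \textbf{P}_m(n_{EST}>n)=\textbf{E}_m[n_{EST}]=O(1)$.

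The main obstacle is item three: the estimation windows are \emph{not} monotone in time, because the algorithm may bounce between Phases~1 and~2, each bounce resetting $T$ and shrinking the current window, so one cannot simply assert that the window count grows linearly. The careful step is to show that the probability of a reversion occurring after $\tau_c+n$ is itself exponentially small, so that with high probability the last Phase-1 exit happens early and the window then grows monotonically; combining this with the positive drift of the finitely many LLR walks via a maximal inequality (covering all future slots at once, rather than one $n'$ at a time) is what makes the ``permanent correctness'' statement rigorous and produces the exponential tail uniformly over $n'\ge\tau_{EST}$.
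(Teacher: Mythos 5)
Your toolkit (Chernoff bounds on negative-drift LLR sums, finiteness of $\Theta$) is the same as the paper's, and you correctly flag the central obstacle, but the resolution you sketch does not close the gap and two of your intermediate claims fail. First, the assertion that cell $m$ accumulates $\Omega(n)$ post-change observations by $\tau_c+n$ \emph{regardless of how time splits between the phases} is false: while the algorithm sits in Phase 2 with $\hat m(t)\neq m$, cell $m$ is not sampled at all. This wrong-cell-exploitation case cannot be absorbed into a sampling-rate argument; the paper handles it by a dedicated dichotomy --- either there are more than $\sqrt n/2$ distinct entries into Phase 2 with the wrong cell (each entry forces a wrong Phase 1 estimate $\hat\theta_m(t)\in\Theta^{(0)}$, and Chernoff over those $\geq \sqrt n/2$ forced events gives $e^{-\gamma\sqrt n}$), or a single wrong stay lasts more than $\sqrt n/2$ slots (forcing the wrong cell's growing-window MLE outside $\Theta^{(0)}$ on $\sqrt n/2$ i.i.d.\ normal-state samples, again $e^{-\gamma\sqrt n}$); if neither holds, the total wrong-cell time is at most $\frac{\sqrt n}{2}\cdot\frac{\sqrt n}{2}=\frac n4$, a contradiction. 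This trade-off is the actual source of the $\sqrt n$ in the exponent; your description of the $\sqrt n$ rate as ``deliberately loose'' suggests you believe an $e^{-\gamma n}$ rate is available by this method, and it is not, precisely because of this case.

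Second, your repair of the non-monotone-window problem --- ``show that the probability of a reversion occurring after $\tau_c+n$ is itself exponentially small'' --- is not true uniformly in time: immediately after a fresh entry into Phase 2 the exploitation window holds $O(1)$ samples, so a reversion at that slot has probability bounded away from zero, and the entry time $T$ is random. Relatedly, your concentration bound $|\Theta|e^{-c_0 k}$ is inapplicable to Phase 1, whose estimates use only the last $N$ (fixed, e.g.\ $N=1$) observations and therefore have constant per-round error probability forever; a union bound over all future slots $n'\geq\tau_c+n$ thus diverges unless you have already established that the algorithm permanently remains in exploitation --- which is the statement being proved. Your event (a) has the same circularity: ``fewer than $\sqrt n$ post-change samples in the current window forces a late last-exit from Phase 1, whose probability decays exponentially'' is exactly the unproved claim, not a reduction of it. What is needed instead, and what the paper supplies, is a time-budget accounting: on $\{n_{EST}>n\}$, at least $n/2$ post-change slots lie in Phase 1 or at least $n/2$ lie in Phase 2; each alternative is split into exhaustive sub-cases (at least $n/(6M)$ Phase 1 rounds ending in return, or in entry with $\hat m=m$, or with $\hat m\neq m$; long correct-cell exploitation; the wrong-cell dichotomy above), and in every sub-case the forced \emph{number} of wrong-MLE events --- each certified by a negative sum of LLRs whose true mean is a positive KL divergence --- is pushed through a Chernoff bound. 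Your proposal names the right large-deviations engine but omits this combinatorial decomposition, which is the substance of the proof.
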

\begin{proof}
At any given moment, we are either in Phase 1 (exploration) or Phase 2 (exploitation). Therefore, if $n_{EST} > n$, it implies that after the change point, we remained in either Phase 1 or Phase 2 for at least $n/2$ time indices. Let $T_1$ denote the time spent in Phase 1 after $\tau_c$. Then, we have:
    \begin{align}
    \label{lemma 1 proof eq1}
        \mathbf {P}_{m}(n_{EST}>n) &\leq \mathbf {P}_{m}\left(n_{EST}>n \; , \; T_{1}\geq \frac{n}{2} \right)
        \notag \\&+
        \mathbf {P}_{m}\left(n_{EST}>n \; , \; T_{1} < \frac{n}{2} \right).
    \end{align}

We now bound the first term on the RHS of (\ref{lemma 1 proof eq1}). A complete round of Phase 1 requires $M$ time indices. Therefore, there are at least $\frac{n}{2M}$ rounds of Phase 1 after the anomaly occurs. Each round of Phase 1 can lead to one of the following outcomes: (i) Return to Phase 1. (ii) Move to Phase 2 with $\hat{m}(t)=m$. (iii) Move to Phase 2 with $\hat{m}(t) \neq m$. Based on the above, at least $\frac{n}{6M}$ rounds end with one of these outcomes. Thus, it suffices to demonstrate that the probability of each outcome is bounded.

\noindent
\textit{(i) For at least $\frac{n}{6M}$ rounds that ended by returning to Phase 1:} If a round of Phase 1 ends with a return to Phase 1, it implies either that the observations ended by: $\hat{\theta}_{j} \in \Theta^{(0)}$ for all $j$, or that there exist two cells with $\hat{\theta}_{j} \not\in \Theta^{(0)}$ (so we have $j \neq m$ with $\hat{\theta}_{j} \not \in \Theta^{(0)}$). Following the same arguments as before, we deduce that after the change point, we have at least $\frac{n}{12M}$ rounds of Phase 1 that end with $\hat{\theta}_{j} \in \Theta^{(0)}$ for all $j$, or at least $\frac{n}{12M}$ rounds of Phase 1 that end with some $j \neq m$ that $\hat{\theta}_{j} \not \in \Theta^{(0)}$. We will show that each of them is bounded.

If there are at least $\frac{n}{12M}$ rounds ending with $\hat{\theta}_{j}(t) \in \Theta^{(0)}$ for all $j$, it implies that there are at least $\frac{n}{12M}$ observations of cell $m$ resulting in $\hat{\theta}_{m}(t) \in \Theta^{(0)}$. 
Lets $N_{m}(n)$ be the total number of these observations and $t_{1},...,t_{N_{m}(n)}$ represent their corresponding time indices. Therefore, by the definition of the MLE in (\ref{eq: MLE estimation in exploration}) we know that $\ell_{m}^{(\theta^{(1)},\hat{\theta}_{m}(t_{i}))} (t_{i})<0$ whenever $\hat{\theta}_{m}(t_{i}) \neq \theta^{(1)}$, for all $1 \leq i \leq N_{m}(n)$, and by summation we have,
    \begin{equation}
        \sum_{i=1}^{N_{m}(n)} \ell_{m}^{(\theta^{(1)},\hat{\theta}_{m}(t_{i}))} (t_{i})<0.
    \end{equation}
Thus, applying the Chernoff bound and leveraging the independence of $\ell_{m}^{(\theta^{(1)},\hat{\theta}_{m}(t_{i}))} (t_{i})$, we obtain:
    \begin{align}
        &\displaystyle
        \mathbf {P}_{m}\left( \sum_{i=1}^{N_{m}(n)} \ell_{m}^{(\theta^{(1)},\hat{\theta}_{m}(t_{i}))} (t_{i}) < 0 \right) \notag \\ &\leq \mathbf \prod_{i=1}^{N_{m}(n)}{E}_m\left[e^{-s\cdot\ell_{m}^{(\theta^{(1)},\hat{\theta}_{m}(t_{i}))}(t_{i})}\right]
    \end{align}
    for all $s>0$.
    
Note that a moment-generating function (MGF) is equal to 1 at $s=0$. Since $\textbf{E}_m\left[ -\ell_{m}^{(\theta^{(1)},\hat{\theta}_{m}(t_{i}))}(t_{i})\right] = - \displaystyle{D}(\theta ^{(1)}|\hat{\theta}_{m}(t_{i}))<0$ is strictly negative, the derivative of the MGF of 
$-\ell_{m}^{(\theta^{(1)},\hat{\theta}_{m}(t_{i}))}(t_{i})$ with respect to $s$ is strictly negative at $s=0$. Therefore, there exist $s_{i}>0$ and $\gamma_{i}>0$ such that $\textbf{E}_m\left[e^{-s_{i}\cdot\ell_{m}^{(\theta^{(1)},\hat{\theta}_{m}(t_{i}))}(t_{i})}\right] < e^{-\gamma_{i}}$. Consequently, we can find $s>0$ and $\gamma>0$ such that $\textbf{E}_m\left[e^{-s\cdot\ell_{m}^{(\theta^{(1)},\hat{\theta}_{m}(t_{i}))}(t_{i})}\right] < e^{-\gamma}$ for all $\hat{\theta}_{m}(t_{i})\in \Theta^{(0)}$. Hence,
    \begin{align}
        \mathbf \prod_{i=1}^{N_{m}(n)}\textbf{E}_m\left[e^{-s\cdot\ell_{m}^{(\theta^{(1)},\hat{\theta}_{m}(t_{i}))}(t_{i})}\right] \leq e^{-\gamma \cdot N_{m}(n)} \notag \\ \leq e^{-\gamma'n} \leq e^{-\gamma'\sqrt{n}},
    \end{align}
where the last inequality holds because $N_{m}(n)\geq\frac{n} {12M}$, and for an appropriately chosen $\gamma'$, we obtain the desired result.

For at least $\frac{n}{12M}$ rounds of Phase 1, some $j \neq m$ satisfies $\hat{\theta}_{j}(t) \not \in \Theta^{(0)}$. Note that this $j$ may vary across rounds, but there are at least $\frac{n}{12M^{2}}$ rounds where the same $j$ occurs. Consequently, we have $N_{j}(n) \geq \frac{n}{12M^{2}}$ observations of $j$ that resulted in $\hat{\theta}_{j}(t) \neq \theta^{(0)}$. Let $t_{1}, \dots, t_{N_{j}(n)}$ denote the time indices of these observations, and $\hat{\theta}_{j}(t_{i})$ represent their corresponding estimates during Phase 1. By the MLE definition (\ref{eq: MLE estimation in exploration}), we have $\ell_{j}^{(\theta^{(0)}, \hat{\theta}_{j}(t_{i}))}(t_{i}) < 0$ whenever $\hat{\theta}_{j}(t_{i}) \neq \theta^{(0)}$, for all $1 \leq i \leq N_{j}(n)$, which implies:
    
    \begin{equation}
        \sum_{i=1}^{N_{j}(n)} \ell_{j}^{(\theta^{(0)},\hat{\theta}_{j}(t_{i}))}(t_{i})<0.
    \end{equation}

By applying the Chernoff bound and leveraging the independence of $\ell_{j}^{(\theta^{(0)},\hat{\theta}_{j}(t_{i}))}(t_{i})$, we obtain:
    \begin{align}
    \label{proof lemma 1 eq}
        &\displaystyle
        \mathbf {P}_{m}\left( \sum_{i=1}^{N_{j}(n)} \ell_{j}^{(\theta^{(0)},\hat{\theta}_{j}(t_{i}))}(t_{i})<0 \right) \notag \\ &\leq \mathbf \prod_{i=1}^{N_{j}(n)}\textbf{E}_m\left[e^{-s\cdot\ell_{j}^{(\theta^{(0)},\hat{\theta}_{j}(t_{i}))}(t_{i})}\right]
    \end{align}
    For all $s>0$. Since $\textbf{E}_m\left[ -\ell_{j}^{(\theta^{(0)},\hat{\theta}_{j}(t_{i}))}(t_{i})\right] = - \displaystyle D(\theta^{(0)}|\hat{\theta}_{j}(t_{i}) )<0$ for all $1 \leq i \leq N_{j}(n)$, by applying similar arguments as in the previous case, we can find $s>0$ and $\gamma>0$ such that $\textbf{E}_m\left[e^{-s\cdot\ell_{m}^{(\theta^{(0)},\hat{\theta}_{j}(t_{i}))}(t_{i})}\right] < e^{-\gamma}$ for all $\hat{\theta}_{j}(t_{i}) \not \in \Theta^{(0)}$. Hence, (\ref{proof lemma 1 eq}) can be bounded by:
    \begin{align}
        e^{-\gamma \cdot N_{j}(n)} &\leq e^{-\gamma'n} \leq e^{\gamma'\sqrt{n}},
    \end{align}
    for some suitable $\gamma'$.

    \textit{(ii) For at least $\frac{n}{6M}$ rounds that ended by moving to Phase 2 with $\hat{m}(t)=m $:} 
Since $\tau_{EST} > n$, we are guaranteed to return to Phase 1. This means that the observations of cell $m$ in Phase 2 result in the MLE, as defined in (\ref{eq: MLE estimation in exploitation}), being within the normal parameter space, $\Theta^{(0)}$. 
Denote by $\mathcal{N}_{2}(n)$ the number of times we enter Phase 2 under these conditions up to time $n$. Therefore, we have $\mathcal{N}_{2}(n)\geq\frac{n}{6M}$ under this assumption. 
For $1 \leq i \leq \mathcal{N}_{2}(n)$ let $T_{i}$ be the time at which we exit the exploration phase (as defined in the exploitation phase), and $N_{i}$ be the time at which we exit Phase 2 and return to Phase 1. Thus, for all $1 \leq i \leq \mathcal{N}_{2}(n)$ we have:
    \begin{equation}
    \label{proof lemma 1 eq for case ii}
        \hat{\theta}_{m}(N_{i}) = \arg \max _{\theta \in \Theta} f(\overline{\textbf{y}}_{m}(N_{i})|\theta) \in \Theta^{(0)},
    \end{equation}
    when $\hat{\theta}_{m}(N_{i}) \neq \theta^{(1)}$ and $\overline{\textbf{y}}_{m}(N_{i}) = \{y_{m}(T_{i}+1),..., \\y_{m}(N_{i})\}$.
    Therefore, (\ref{proof lemma 1 eq for case ii}) implies:
    \begin{equation}
        \sum_{t=T_{i}+1}^{N_{i}} \ell_{m}^{(\theta^{(1)},\hat{\theta}_{m}(N_{i}))}(t)<0.
    \end{equation}
    By summing over all $i$, we obtain:
    \begin{equation}
        \sum_{i=1}^{\mathcal{N}_{2}(n)}\left( \sum_{t=T{i}+1}^{N_{i}} \ell_{m}^{(\theta^{(1)},\hat{\theta}_{m}(N_{i}))}(t)\right)<0.
    \end{equation}
    By applying the Chernoff bound and the i.i.d property of $\ell_{j}^{(\theta^{(1)},\hat{\theta}_{m}(N_{i}))}$ we obtain:
    \begin{align}
        &\displaystyle
        \mathbf {P}_{m}\left(  \sum_{i=1}^{\mathcal{N}_{2}(n)} \sum_{t=T{i}+1}^{N_{i}} \ell_{m}^{(\theta^{(1)},\hat{\theta}_{m}(N_{i}))}(t)<0 \right)\notag \\ &\leq \mathbf \prod_{i=1}^{\mathcal{N}_{2}(n)} \left[\textbf{E}_m\left(e^{-s\cdot\ell_{m}^{(\theta^{(1)},\hat{\theta}_{m}(N_{i}))}(T_{i}+1)}\right) \right]^{N_{i}-T_{i}}.
    \end{align}
    Since $\textbf{E}_m\left[ -\ell_{m}^{(\theta^{(1)},\hat{\theta}_{m}(N_{i}))}(T_{i}+1)\right] = - \displaystyle D(\theta^{(1)}|\hat{\theta}_{m}(N_{i}) )<0$ for all $1 \leq i \leq \mathcal{N}_{2}(n)$, there exist $s_{i}>0$ and $\gamma_{i}>0$ such that $\textbf{E}_m\left[e^{-s_{i}\cdot\ell_{m}^{(\theta^{(1)},\hat{\theta}_{m}(N_{i}))}(T_{i}+1)}\right]<e^{-\gamma_{i}}$. Thus, we can find $s>0$ and $\gamma>0$ such that $\textbf{E}_m\left[e^{-s\cdot\ell_{m}^{(\theta^{(1)},\hat{\theta}_{m}(N_{i}))}(T_{i}+1)}\right]<e^{-\gamma}$ for all $\hat{\theta}_{m}(N_{i})$. As a result, we have:
    \begin{align}
        &\displaystyle
         \mathbf \prod_{i=1}^{\mathcal{N}_{2}(n)} \left[\textbf{E}_m\left(e^{-s\cdot\ell_{m}^{(\theta^{(1)},\hat{\theta}_{m}(N_{i}))}(T_{i}+1)}\right) \right]^{N_{i}-T_{i}}\notag \\ &\leq  \mathbf \prod_{i=1}^{\mathcal{N}_{2}(n)} e^{-\gamma \cdot(N_{i}-T_{i})} \leq e^{-\gamma \mathcal{N}_{2}(n)} \leq e^{-\gamma'n} \notag \\ &\leq e^{-\gamma \sqrt{n}},
    \end{align}
    where the second inequality follows from $N_{i}-T_{i}\geq 1$.

    \textit{(iii) For at least $\frac{n}{6M}$ rounds that ended by moving to Phase 2 with $\hat{m}(t)\neq m $:}

If we moved to Phase 2 with $\hat{m}(n) \neq m$, this implies that at least $\frac{n}{6M}$ observations of cell $m$ resulted in $\hat{\theta}_{m}(t) \in \Theta^{(0)}$. Therefore, the probability is bounded using the same arguments as in the first case of (i) in step 1.\vspace{0.2cm}

Next, we bound the second term on the RHS of (\ref{lemma 1 proof eq1}). We can enter Phase 2 in one of two ways: (i) $\hat{m}(t) = m$, or (ii) $\hat{m}(t) \neq m$. Spending less than $\frac{n}{2}$ time indices in Phase 1 implies that we spent more than $\frac{n}{2}$ time indices in Phase 2. In this case, we either spent more than $\frac{n}{4}$ time indices in Phase 2 with $\hat{m}(t) = m$, or we spent more than $\frac{n}{4}$ time indices in Phase 2 with $\hat{m}(t) \neq m$. It is sufficient to show that each of these cases is bounded.
    
\textit{(i) For more than $\frac{n}{4}$ time on Phase 2 with $\hat{m}(t)=m$:} As in (ii) in step 1, denote by $\mathcal{N}_{2}(n)$ the number of times we entered Phase 2 with cell $m$ up to time $n$. Since $\tau_{EST} > n$, we know that we will return to Phase 1. Let $T_{i}$ and $N_{i}$ represent the time we entered and the time we returned to Phase 1, respectively, for all $1 \leq i \leq \mathcal{N}_{2}(n)$. Notice that $\sum_{i=1}^{\mathcal{N}_{2}(n)}(T_{i} - N_{i})$ is the total time spent in Phase 2 with cell $m$ up to time $n$, which is assumed to be greater than $\frac{n}{4}$. Using similar arguments as in (ii) in step 1, there exists $\gamma > 0$ such that the probability of this case is bounded by:
    \begin{align}
        \mathbf \prod_{i=1}^{\mathcal{N}_{2}(n)} e^{-\gamma \cdot(N_{i}-T_{i})} = & e^{-\gamma \sum_{i=1}^{\mathcal{N}_{2}(n)}(T_{i}-N_{i})} < e^{-\gamma \cdot \frac{n}{4}} \notag \\ &\leq e^{-\gamma' n} \leq e^{-\gamma' \sqrt{n}}. 
    \end{align}     

    \textit{(ii) For more than $\frac{n}{4}$ time on Phase 2 with $\hat{m}(t)\neq m$:} 
    
    Under this assumption, one of the following occurs: (a) We have more than $\frac{\sqrt{n}}{2}$ rounds of Phase 1 that end with moving to Phase 2 where $\hat{m}(t) \neq m$. (b) We enter Phase 2 with $\hat{m}(t) \neq m$ and remain in Phase 2 for more than $\frac{\sqrt{n}}{2}$ time indices. If neither (a) nor (b) occurs, we will conclude that we entered Phase 2 with the wrong cell at most $\frac{\sqrt{n}}{2}$ times, and in each case, we stayed for at most $\frac{\sqrt{n}}{2}$ time indices. Therefore, the total time spent in Phase 2 with the wrong cell is at most $\frac{\sqrt{n}}{2} \cdot \frac{\sqrt{n}}{2} = \frac{n}{4}$, which contradicts the assumption in (ii). Hence, to prove the lemma, it remains to show that both (a) and (b) are bounded.

For Case (a), we have more than $\frac{\sqrt{n}}{2}$ observations of cell $m$ that result in $\hat{\theta}_{m}(t) \in \Theta^{(0)}$. Therefore, following the same reasoning as in the first case of (i) in step 1, let $N_{m}(n)$ denote the number of these observations up to time $n$, with the condition that $N_{m}(n) > \frac{\sqrt{n}}{2}$. Using the Chernoff bound and the properties of $\ell_{m}^{(\theta^{(1)}, \hat{\theta}_{m}(t))}$ (independence and negative expectation), we can find a constant $\gamma > 0$ such that the probability of this case is bounded by:
    \begin{align}
        e^{-\gamma \cdot N_{m}(n)} < e^{-\gamma \frac{\sqrt{n}}{2} }= e^{-\gamma' \sqrt{n}}.
    \end{align}

For Case (b), if we spent more than $\frac{\sqrt{n}}{2}$ time units in Phase 2 with $\hat{m} \neq m$, this implies that after $\frac{\sqrt{n}}{2}$ time units in Phase 2, we did not return to Phase 1. Let $T$ denote the time at which we entered Phase 2. Thus, at time $t = T + \frac{\sqrt{n}}{2}$, the MLE, as defined in (\ref{eq: MLE estimation in exploitation}), satisfies $\hat{\theta}_{\hat{m}}(t) \not \in \Theta^{(0)}$. Therefore,
    \begin{equation}
        \sum_{r=T+1}^{t} \ell_{\hat{m}}^{(\theta^{(0)},\hat{\theta}_{\hat{m}(t)})}(r)<0.
    \end{equation}
    By applying the Chernoff bound and using the i.i.d property of $\ell_{\hat{m}}^{(\theta^{(0)},\hat{\theta}_{\hat{m}(t)})}(r)$, we have:
    \begin{align}
    \label{eq: lemma 1 ii b}
        &\displaystyle
        \mathbf {P}_{m}\left( \sum_{r=T+1}^{t} \ell_{\hat{m}}^{(\theta^{(0)},\hat{\theta}_{\hat{m}(t)})}(r)<0 \right) \notag \\ &\leq
        \left[\textbf{E}_m\left(e^{-s\cdot\ell_{\hat{m}}^{(\theta^{(0)},\hat{\theta}_{\hat{m}}(t))}(T+1)}\right) \right] ^{\frac{\sqrt{n}}{2}}.
    \end{align}
    Using the fact that $\textbf{E}_m\left( \ell_{\hat{m}}^{(\theta^{(0)},\hat{\theta}_{\hat{m}}(t))}(T+1) \right) =  -\displaystyle D(\theta^{(0)}||\hat{\theta}_{\hat{m}}(t) )<0$, we can find $s>0$ and $\gamma>0$, such that (\ref{eq: lemma 1 ii b}) is bounded by
    \begin{align}
        e^{-\gamma \cdot \frac{\sqrt{n}}{2}} = e^{-\gamma' \sqrt{n}},
    \end{align}
    for $\gamma'=\gamma/2$, which completes the proof of the lemma.
\end{proof}

\begin{definition}
Let $\tau_{U}$ be the smallest integer such that $S_{m}(n) \geq -\log(c)$ for all $n>\tau_{EST}$:
\begin{equation}
    \tau_{U} \triangleq \inf \{ n>\tau_{EST} : S_{m}(n) \geq -\log(c) \},
\end{equation}
and let $n_{U} \triangleq \tau_{U} - \tau_{EST}$ denote the total amount of time between $\tau_{EST}$ and $\tau_{U}$.    
\end{definition}

\begin{lemma}
\label{lemma 1}
    Assume that SCPA is implemented indefinitely. Then, for every fixed $\epsilon>0$ there exist $C>0$ and $\gamma>0$ such that
\begin{equation}
\begin{array}{l}
\displaystyle
\mathbf {P}_m\left(n_{U} >n\right)\leq C e^{-\gamma \sqrt{n}}\vspace{0.1cm}\\ \hspace{2cm}
\displaystyle
\forall n >-(1+\epsilon)\log c/D\left(\theta ^{(1)}||\theta ^{(0)}\right).    
\end{array}
\end{equation}

\begin{proof}
        Let 
        \begin{align}
            \ell_{m}(t) \triangleq \ell_{m}^{(\hat{\theta}_{m}(t-1),\theta^{(0)})}(t)  =  \log \frac{f(y_{m}(t)|\hat{\theta}_{m}(t-1))}{f(y_{m}(t))|{\theta}^{(0)})}
        \end{align}
        and,
        \begin{align}
            \Tilde{\ell}_{m}(t) \triangleq \ell_{m}(t) - D(\theta^{(1)}||\theta^{(0)})
        \end{align}
denote the ALLR and the normalized ALLR, respectively, of cell $m$ at time $t$. Note that for all $t > \tau_{EST}$, $\Tilde{\ell}_{m}(t)$ is a zero-mean random variable under hypothesis $H_{m}$, and as such, we refer to it as the normalized ALLR. 

For $n > \tau_{EST}$, by the definition of $\tau_{EST}$ and in the context of the exploitation phase notation, we have $T \leq \tau_{EST}$. Let $\epsilon_{1} = D(\theta^{(1)}||\theta^{(0)})\epsilon / (1+\epsilon)>0$. Then,         \begin{align}
        \label{lemma 2 proof eq 1}
            &\displaystyle
            \sum_{t=T+2}^{\tau_{EST}+n}\ell_{m}(t) + \log{c} = 
            \sum_{t=T+2}^{\tau_{EST}}\ell_{m}(t) + \sum_{\tau_{EST}+1}^{\tau_{EST}+n}\ell_{m}(t) + \log c  
            \notag \\ & = \sum_{t=T+2}^{\tau_{EST}}\ell_{m}(t) + \sum_{\tau_{EST}+1}^{\tau_{EST}+n}\Tilde{\ell}_{m}(t)+ nD(\theta^{(1)}||\theta^{(0)}) + \log c  
            \notag \\ &\geq \sum_{t=T+2}^{\tau_{EST}}\ell_{m}(t) + \sum_{\tau_{EST}+1}^{\tau_{EST}+n}\Tilde{\ell}_{m}(t)+ n\epsilon_{1},
        \end{align}
        for all $n>-(1+\epsilon)\log c / D(\theta^{(1)}|\theta^{(0)}).$
        
        \vspace{0.1cm}
        As a result, $\sum_{t=T+2}^{\tau_{EST}+n}\ell_{m}(t) \leq -\log{c}$, implies that 
\begin{center}
$\sum_{t=T+2}^{\tau_{EST}}\ell_{m}(t) + \sum_{\tau_{EST}+1}^{\tau_{EST}+n}\Tilde{\ell}_{m}(t) \leq  -n\epsilon_{1}$.    
\end{center}
        Therefore, for every $\epsilon>0$ there exists $\epsilon_{1}>0$ such that,
        \begin{align}
        \label{eq : lemma 2 eq 1}
        &\displaystyle
        \mathbf {P}_{m}\left( \sum_{t=T+2}^{\tau_{EST}+n}\ell_{m}(t) \leq -\log{c} \right) 
        \notag \\ & \leq 
        \mathbf {P}_{m}\left( \sum_{t=T+2}^{\tau_{EST}}\ell_{m}(t) + \sum_{\tau_{EST}+1}^{\tau_{EST}+n}\Tilde{\ell}_{m}(t) \leq  -n\epsilon_{1} \right)
        \notag \\ & \leq 
        \mathbf {P}_{m}\left( \sum_{t=T+2}^{\tau_{EST}}\ell_{m}(t) \leq  -n\epsilon_{1}/2 \right)
        \notag \\ & + 
        \mathbf {P}_{m}\left( \sum_{\tau_{EST}+1}^{\tau_{EST}+n}\Tilde{\ell}_{m}(t) \leq  -n\epsilon_{1}/2 \right).
        \end{align}
        Hence, it suffices to show that each term on the RHS of (\ref{eq : lemma 2 eq 1}) is bounded.\vspace{0.2cm}

Next, we bound the first term on the RHS of (\ref{eq : lemma 2 eq 1}). Fix $q>0$. Then,
        \begin{align}
        \label{eq : lemma 2 eq 2}
        &\displaystyle
        \mathbf {P}_{m}\left( \sum_{t=T+2}^{\tau_{EST}}\ell_{m}(t) \leq  -n\epsilon_{1}/2 \right)
        \notag \\ & \leq 
        \mathbf {P}_{m}\left( \sum_{t=T+2}^{\tau_{EST}}\ell_{m}(t) \leq  -n\epsilon_{1}/2 \;  , \; n_{EST}> qn \right)
        \notag \\ & + 
        \mathbf {P}_{m}\left( \sum_{t=T+2}^{\tau_{EST}}\ell_{m}(t) \leq  -n\epsilon_{1}/2 \;  , \; n_{EST}\leq qn \right).    
        \end{align}
        Again, it suffices to prove that each term on the RHS of (\ref{eq : lemma 2 eq 2}) is bounded.
        The first term is bounded by Lemma 1. For the second term, we have two cases: (i) The last exit from Phase 1 occurred after the change point, i.e., $T \geq \tau_c$. (ii) The last exit from Phase 1 occurred before the change point, i.e., $T<\tau_c$. Under Case (i), $\tau_{EST}-T \leq n_{EST} \leq qn$. Since $q>0$ can be arbitrarily small and $\ell_m(t)$ has a finite expectation, by applying the Chernoff bound we obtain that the probability of Case (i) decreases exponentially with $n$. Hence, we have $C>0$ and $\gamma>0$ such that the probability of (i) is bounded by $Ce^{-\gamma \sqrt{n}}$. We fix again $r>0$. Then, (ii) is bounded by:
        \begin{align}
        \label{eq : lemma 2 eq 4}
        \mathbf {P}_{m} \bigg( &\displaystyle \sum_{t=T+2}^{\tau_{EST}}\ell_{m}(t) \leq  -n\epsilon_{1}/2 \;  , \; n_{EST} \leq qn \; , \notag  \\ & \tau_c-T \leq rn \; , \; \; T < \tau_c \bigg) \notag \\
        + \;\; \mathbf {P}_{m} \bigg( &\displaystyle \sum_{t=T+2}^{\tau_{EST}}\ell_{m}(t) \leq  -n\epsilon_{1}/2 \;  , \; n_{EST} \leq qn \; , \notag  \\ & \tau_c-T > rn \; , \; \; T < \tau_c \bigg) \notag \\ \leq
         \mathbf {P}_{m} \bigg( &\displaystyle \sum_{t=T+2}^{\tau_{EST}}\ell_{m}(t) \leq  -n\epsilon_{1}/2 \;  , \; \tau_{EST}-T\leq (q+r)n \bigg)\notag \\
        + \;\; \mathbf {P}_{m} \bigg( &\displaystyle \sum_{t=T+2}^{\tau_{EST}}\ell_{m}(t) \leq  -n\epsilon_{1}/2 \;  , \tau_c-T > rn \bigg).
        \end{align}
The first term on the RHS of (\ref{eq : lemma 2 eq 4}) can be bounded using similar arguments as in Case (i). For any $q > 0$ and $r > 0$, arbitrarily small values can be chosen, and $\ell_m(t)$ has a finite expectation. For the second term, we observe that after the time index $\tau_c$, there was no return to Phase 1, i.e., $\hat{\theta}_m(\tau_c) \not \in \Theta^{(0)}$. Consequently, the observations $y_m(T+1), \dots, y_m(\tau_c)$ resulted in $\hat{\theta}_m(\tau_c) \neq \theta^{(0)}$. By the MLE definition in (\ref{eq: MLE estimation in exploration}), we have:
        \begin{align}
            \sum_{t=T+1}^{\tau_c} \ell_m^{(\theta^{(0)} , \hat{\theta}_m(\tau_c))}<0.
        \end{align}
Hence, by applying the Chernoff bound and using the i.i.d property of $\ell_m^{(\theta^{(0)} , \hat{\theta}_m(\tau_c))}(t)$, we obtain:
        \begin{align}
        \label{eq : lemma 2 eq 5}
            &\displaystyle \mathbf {P}_{m} \bigg(  \sum_{t=T+1}^{\tau_{EST}}\ell_{m}(t) \leq  -n\epsilon_{1}/2 \;  , \tau_c-T > rn \bigg)
            \notag \\
            & \leq \mathbf{E}_{m} \left[ e^{-s \cdot \ell_m^{(\theta^{(0)} , \hat{\theta}_m(\tau_c))}(T+1)} \right] ^{\tau_c-T},
        \end{align}
        for all $s>0$. Since $\mathbf{E}_{m} \left( \ell _m^ {(\theta^{(0)} , \hat{\theta}_m(\tau_c))}(T+1) \right) = \displaystyle D(\theta^{(0)}||\hat{\theta}_m(\tau_c))>0$, we can find $\gamma>0$ suvh that (\ref{eq : lemma 2 eq 5}) is bounded by: 
        \begin{align}
            e^{-\gamma \cdot (\tau_c-T)} \leq e^{-\gamma \cdot rn} \leq e^{-\gamma' n} \leq e^{-\gamma ' \sqrt{n}},
        \end{align}
        for $\gamma '=\gamma \cdot r >0$.\vspace{0.2cm}

Next, we Bound the second term on the RHS of (\ref{eq : lemma 2 eq 1}). Note that $\Tilde{\ell}_m(t)$ has zero mean for all $t>\tau_{EST}$. Therefore, by applying the Chernoff bound and using the i.i.d property of $\Tilde{\ell}_m(t)$ we have:
\begin{equation}
\begin{array}{l}
        \label{eq : lemma 2 eq 6}
    \displaystyle\mathbf {P}_{m}\left( \sum_{\tau_{EST}+1}^{\tau_{EST}+n}\Tilde{\ell}_{m}(t) \leq  -n\epsilon_{1}/2 \right) \\
    \displaystyle\mathbf {E}_{m} \left[ e^{-s\cdot \sum_{t=\tau_{EST}+1}^{\tau_{EST} +n} (\Tilde{\ell}_m(t) +\epsilon_1) } \right]  \leq \mathbf{E}_{m} \left[ e^{-s \cdot (\Tilde{\ell}_m(t) +\epsilon_1)} \right]^n,
\end{array}    
\end{equation}
for all $s>0$. Hence, there exist $\gamma>0$ such that (\ref{eq : lemma 2 eq 6}) is bounded by $e^{-\gamma n}$, which completes the proof of the lemma.
    \end{proof}
\end{lemma}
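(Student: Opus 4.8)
The plan is to bound the tail of $n_U$ by the probability that the statistic $S_m$ has not yet crossed the threshold by time $\tau_{EST}+n$. Since $\tau_U$ is the first index past $\tau_{EST}$ at which $S_m$ reaches $-\log c$, the event $\{n_U>n\}$ is contained in $\{S_m(\tau_{EST}+n)\leq -\log c\}$, so it suffices to bound $\mathbf{P}_m\big(S_m(\tau_{EST}+n)\leq -\log c\big)$. Writing $S_m(\tau_{EST}+n)=\sum_{t=T+2}^{\tau_{EST}+n}\ell_m(t)$, I would split the sum at $\tau_{EST}$ into a transient part $\sum_{t=T+2}^{\tau_{EST}}\ell_m(t)$, accumulated while the estimates are still settling, and a stationary part $\sum_{t=\tau_{EST}+1}^{\tau_{EST}+n}\ell_m(t)$. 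For $t>\tau_{EST}$ the plug-in estimate equals $\theta^{(1)}$ and the observation law is $f(\cdot|\theta^{(1)})$, so each stationary summand has mean $D(\theta^{(1)}||\theta^{(0)})$; centering it as $\tilde\ell_m(t)=\ell_m(t)-D(\theta^{(1)}||\theta^{(0)})$ and invoking the range constraint $n>-(1+\epsilon)\log c/D(\theta^{(1)}||\theta^{(0)})$ turns the deterministic drift $nD(\theta^{(1)}||\theta^{(0)})+\log c$ into a strictly positive margin $n\epsilon_1$ with $\epsilon_1=D(\theta^{(1)}||\theta^{(0)})\epsilon/(1+\epsilon)$. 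Hence $S_m(\tau_{EST}+n)\leq -\log c$ forces $\sum_{t=T+2}^{\tau_{EST}}\ell_m(t)+\sum_{t=\tau_{EST}+1}^{\tau_{EST}+n}\tilde\ell_m(t)\leq -n\epsilon_1$, and a union bound over the two halves each falling below $-n\epsilon_1/2$ reduces the problem to two separate tail estimates.

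The stationary half is the routine one. The centered summands $\tilde\ell_m(t)$ for $t>\tau_{EST}$ are i.i.d.\ and zero-mean under $H_m$, so I would apply a Chernoff bound to $\mathbf{P}_m\big(\sum\tilde\ell_m(t)\leq -n\epsilon_1/2\big)$. The only thing to check is that the moment generating function $\mathbf{E}_m[e^{-s(\tilde\ell_m+\epsilon_1/2)}]$ drops below $1$ for some $s>0$; this holds because its derivative at $s=0$ equals $-\epsilon_1/2<0$. This yields a bound of the form $e^{-\gamma n}$, which is stronger than the claimed $e^{-\gamma\sqrt n}$ and so is easily absorbed.

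The transient half is where the real work lies, and I expect it to be the main obstacle: the summands $\ell_m(t)$ for $t\leq\tau_{EST}$ are neither identically distributed (the plug-in estimate $\hat\theta_m(t-1)$ varies, and the observation law switches at $\tau_c$) nor summed over a deterministic range, since the length $\tau_{EST}-T-1$ is itself random. My plan is to control it by conditioning on the size of $n_{EST}=\tau_{EST}-\tau_c$. Splitting on $\{n_{EST}>qn\}$ versus $\{n_{EST}\leq qn\}$ for a small fixed $q>0$, the first piece is handled directly by the preceding tail bound on $n_{EST}$, which already furnishes the $e^{-\gamma\sqrt n}$ decay for $\mathbf{P}_m(n_{EST}>qn)$. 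On $\{n_{EST}\leq qn\}$ I would further condition on the location of the last Phase-1 exit time $T$. When $T\geq\tau_c$ the sum has $O(qn)$ post-change terms, and since $q$ can be taken arbitrarily small while $\Theta$ is finite (so the relevant MGFs are uniformly bounded over the possible estimates), a Chernoff bound makes a deviation below $-n\epsilon_1/2$ exponentially rare. When $T<\tau_c$ I would exploit the structural fact that no return to Phase~1 occurred after $\tau_c$, which forces $\hat\theta_m(\tau_c)\notin\Theta^{(0)}$ and hence $\sum_{t=T+1}^{\tau_c}\ell_m^{(\theta^{(0)},\hat\theta_m(\tau_c))}(t)<0$; a further split on whether $\tau_c-T$ is large relative to $n$ then either reduces to a bounded-length sum, or lets me Chernoff-bound this negative partial sum with a rate proportional to $\tau_c-T$, using that its per-term expectation $D(\theta^{(0)}||\hat\theta_m(\tau_c))$ is strictly positive.

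Collecting the pieces, every term is bounded by a constant multiple of $e^{-\gamma\sqrt n}$ uniformly over the admissible range of $n$, which gives the stated inequality. The one subtlety to watch is bookkeeping of constants: the auxiliary thresholds $q,r$ are chosen after $\epsilon_1$, but the final $C,\gamma$ must come out independent of $n$, so I would keep the dependence of each exponential rate on these fixed thresholds explicit and take a minimum at the end.
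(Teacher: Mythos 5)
Your proposal is correct and follows essentially the same route as the paper's own proof: the same containment of $\{n_U>n\}$ in the non-crossing event, the same centering with margin $\epsilon_1=D(\theta^{(1)}||\theta^{(0)})\epsilon/(1+\epsilon)$, the same union-bound split at $\tau_{EST}$ into transient and stationary sums, the same $q$- and $r$-thresholded case analysis on $n_{EST}$ and $\tau_c-T$ (including the key structural observation that no Phase-1 return after $\tau_c$ forces $\hat{\theta}_m(\tau_c)\notin\Theta^{(0)}$ and hence a negative partial LLR sum with positive per-term drift $D(\theta^{(0)}||\hat{\theta}_m(\tau_c))$), and the same Chernoff bounds plus the preceding lemma's tail bound on $n_{EST}$. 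Your explicit MGF-derivative check and the remark on taking a final minimum over the rates only make explicit steps the paper leaves implicit.
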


\begin{lemma}
The error probability under SCPA is $O\left(c \cdot (- \log c)^{1-\delta}\right)$.  
\end{lemma}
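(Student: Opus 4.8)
The plan is to bound the two contributions to $\alpha_m$ separately, namely the false-alarm term $\mathbf{P}_m(\tau<\tau_c)$ and the missed-detection term $\mathbf{P}_m(\delta\neq m,\tau\geq\tau_c)$, and then sum against the priors to bound $P_e=\sum_m\pi_m\alpha_m$. The engine behind both bounds is the martingale structure created by the \emph{adaptive} LLR. Fix any cell $j$ in its normal state, so that $y_j(t)\sim f(\cdot\mid\theta^{(0)})$, and consider a single Phase-3 epoch (a maximal stretch with a fixed exit time $T$ during which only cell $j$ is probed). Setting $W_n\triangleq\exp(S_j(n))$ for $n\geq T+1$ with $W_{T+1}=1$, I would first verify that $W_n$ is a non-negative martingale of unit mean, since $\hat\theta_j(t-1)$ is $\mathcal F_{t-1}$-measurable and
\[
\mathbf{E}_m\!\left[\frac{f(y_j(t)\mid\hat\theta_j(t-1))}{f(y_j(t)\mid\theta^{(0)})}\;\middle|\;\mathcal F_{t-1}\right]=\int f(y\mid\hat\theta_j(t-1))\,dy=1 .
\]
This predictability of $\hat\theta_j(t-1)$ is exactly why the ALLR \eqref{eq: ALLR with side info}, rather than the GLLR, is used. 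By Ville's maximal inequality, within each such epoch $\mathbf{P}_m(\sup_{n}W_n\geq 1/c\mid\mathcal F_{T+1})\leq c$; since the rule declares cell $j$ precisely when $S_j(n)\geq-\log c$, i.e. $W_n\geq 1/c$, each epoch wrongly declares a normal cell with conditional probability at most $c$.

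For the false-alarm term I would note that before $\tau_c$ every cell, including $m$, is normal, so the bound above applies to every Phase-3 epoch completed before $\tau_c$. Because each epoch consumes at least one time index, the number of epochs active on $\{1,\dots,\tau_c-1\}$ is at most $\tau_c$, which is deterministic. A union bound over these epochs gives $\mathbf{P}_m(\tau<\tau_c)\leq\tau_c\,c$, and Assumption~1 ($\tau_c=O((-\log c)^{1-\delta})$) yields $\mathbf{P}_m(\tau<\tau_c)=O\!\big(c\,(-\log c)^{1-\delta}\big)$.

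For the missed-detection term I would use that declaring $\delta=j\neq m$ requires a threshold crossing in a Phase-3 epoch devoted to cell $j$, and that every $j\neq m$ stays normal for all time, so $\exp(S_j)$ is a martingale throughout and the per-epoch bound $\leq c$ holds. The key structural fact is that for $n\geq\tau_{EST}$ only the true cell $m$ is probed, so \emph{all} Phase-3 epochs assigned to cells $j\neq m$ terminate before $\tau_{EST}=\tau_c+n_{EST}$; hence their number is at most $\tau_{EST}$. Writing the missed-detection probability as the expected number of such epochs that cross the threshold, conditioning each summand on its (stopping-time) start, and applying the tower property with the per-epoch bound, I obtain
\[
\mathbf{P}_m(\delta\neq m,\tau\geq\tau_c)\;\leq\;c\,\mathbf{E}_m[\tau_{EST}]\;=\;c\big(\tau_c+\mathbf{E}_m[n_{EST}]\big).
\]
The exponential tail bound $\mathbf{P}_m(n_{EST}>n)\leq Ce^{-\gamma\sqrt n}$ proved earlier makes $\mathbf{E}_m[n_{EST}]$ a finite constant independent of $c$, so this term is $O(c\,\tau_c)=O\!\big(c\,(-\log c)^{1-\delta}\big)$ as well (restricting to epochs active on $[\tau_c,\tau_{EST})$ would even give $O(c)$, but this is not needed).

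Combining the two bounds gives $\alpha_m=O\!\big(c\,(-\log c)^{1-\delta}\big)$ uniformly in $m$, and summing against $\{\pi_m\}$ yields $P_e=O\!\big(c\,(-\log c)^{1-\delta}\big)$. I expect the main obstacle to be the bookkeeping around the random, resetting epoch structure: one must argue that the per-epoch martingale bound can be applied conditionally at each random epoch start and then summed over a random number of epochs via the tower property, and that the relevant epoch count is controlled \emph{deterministically} by $\tau_c$ (false alarms) or \emph{stochastically} by $\tau_{EST}$ through the tail bound on $n_{EST}$ (missed detections). Verifying the $\mathcal F_{T}$-measurability needed for the optional-stopping/tower steps at the epoch boundaries is the delicate part; everything else is a union bound and an appeal to Assumption~1.
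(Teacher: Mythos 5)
Your proposal is correct and follows essentially the same route as the paper: the same decomposition into false-alarm and missed-detection terms, the same unit-mean nonnegative martingale built from the adaptive LLR (the paper invokes Lemma 1 of Robbins--Siegmund, which is your Ville inequality), a union/conditioning argument over the at most $\tau_c$ epoch start times for false alarms, and the tail bound on $n_{EST}$ plus Assumption~1 for the missed-detection side. The only cosmetic difference is that the paper conditions explicitly on the epoch start $\{T=r\}$ and obtains the sharper $O(c)$ for the missed-detection term, whereas you bound it by $c\,\mathbf{E}_m[\tau_{EST}]=O(c\,\tau_c)$ — which, as you note, still suffices for the stated $O\left(c\,(-\log c)^{1-\delta}\right)$ rate.
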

    \begin{proof}
        Recall that the error probabuility is given by:
        \begin{equation} P_{e} =  \sum_{m=1}^{M} \pi_{m} \alpha_{m}. \end{equation}
        Let $\alpha_{m,j} ^{MD} = \mathbf {P}_{m}(\delta = j , \, \tau > \tau_c)$ for all $j \neq m$, and $\alpha_{m,j} ^{FA} = \mathbf {P}_{m}(\delta = j , \, \tau \leq \tau_c)$ for all $j$, denote the miss-detect and false-alarm probabilities, respectively. Thus, $\alpha_{m} \vphantom{\bigg[} = \sum_{j \neq m}^{M} \alpha_{m,j} ^{MD} + \sum_{j=1}^{M} \alpha_{m,j} ^{FA}$.
        
        Therefore, it suffices to show that $\alpha_{m,j}^{MD}\, ,\,  \alpha_{m,j}^{FA}= O\left(c \cdot (- \log c)^{1-\delta}\right)$. We start by analyzing the false-alarm error. Note that         
        \begin{align} 
        \alpha _{m,j} ^{FA}&=\mathbf {P}_{m}(\delta = j , \, \tau \leq \tau_c)
        \notag \\&=
        \displaystyle \mathbf {P}_{m}\left (S_{j}(\tau )\geq -\log c \;\; \text{for some} \; \tau \leq \tau_c \right ) 
        \notag \\ &\leq
        \sum_{T=1}^{\tau_c} \mathbf \, {P}_{m} \bigg(  \sum_{t=T+2}^{\tau} \log \frac{f(y_{j}(t)| (\hat{\theta}_{j}(t-1))}{f(y_{j}(t)|\theta^{(0)})} \geq -\log c  \notag \\&\qquad  \qquad \qquad
        \text{for some} \; T < \tau \bigg)
        \notag \\ &\leq 
        \sum_{T=1}^{\tau_c} \mathbf \, {P}_{m} \left( Z_T(\tau - T) \geq \frac{1}{c} \;\; \text{for some} \; T < \tau \right),
        \end{align}
        where we define for fixed $T \geq 1$,
        \begin{equation} Z_T(\tau-T) \triangleq e^{{S_{j}(\tau)}} = \prod_{t=T+2}^{\tau} \frac{f(y_{j}(t)|\hat{\theta}_{j}(t-1))}{f(y_{j}(t)|\theta^{(0)})} \;.  \end{equation}
        Also, notice that $Z_T(\tau-T)$ is a non-negative martingale: 
        \begin{align} 
        &\displaystyle {\textbf{E}}_{m} \left[Z_T\left(\tau -T\right)|\left\lbrace y_j(t)\right\rbrace _{t=T+2}^{\tau-1}\right] 
        \notag \\ &\quad \displaystyle =Z_T(\tau-1-T){\textbf{E}}_{m}\left[\frac{f(y_j(\tau)|\hat{\theta }_j{(\tau-1)})}{f(y_j(\tau)|\theta ^{(0)})}\right] 
        \notag \\ &\quad \displaystyle =Z_T\left(\tau -1 -T\right)\!. \end{align}
        Hence, by Lemma 1 in \cite{robbins1972class} for a non-negative martingale, we have,
        \begin{align} 
        \alpha _{m,j} ^{FA}&=\sum_{T=1}^{\tau_c} c \cdot \displaystyle {\textbf{E}}_{m} \left[Z_T\left( 1 \right) \right] = c \cdot \tau_c \leq c \cdot (-\log c)^{1-\delta},
        \end{align}
        where the last equality is due to the fact that ${{\textbf{E}}}_{m} \left[Z_T\left( 1 \right) \right]=1$, since for $\tau \leq \tau_c$ , $\theta_j(\tau) = \theta^{(0)}$.

Next, we analyze the miss-detect error. Note that 
        \begin{align} 
        \alpha _{m,j} ^{MD}&=\mathbf {P}_{m}(\delta = j , \, \tau > \tau_c)
        \notag \\&=
        \displaystyle \mathbf {P}_{m}\left (S_{j}(\tau )\geq -\log c \;\; \text{for some} \; \tau > \tau_c \right )
        \notag \\ &= 
        \sum_{r=\tau_c+1}^{\infty} \mathbf \, {P}_{m} \bigg(  \sum_{t=r+2}^{\tau} \log \frac{f(y_{j}(t)| (\hat{\theta}_{j}(t-1))}{f(y_{j}(t)|\theta^{(0)})} \geq -\log c  \notag \\&\qquad  \qquad \qquad
        \text{for some} \; r < \tau \; , \; T=r \bigg)
        \notag \\ &= 
        \sum_{r=\tau_c+1}^{\infty} \mathbf \, {P}_{m} \bigg(  \sum_{t=r+2}^{\tau} \log \frac{f(y_{j}(t)| (\hat{\theta}_{j}(t-1))}{f(y_{j}(t)|\theta^{(0)})} \geq -\log c  \notag \\&\qquad 
        \text{for some} \; r < \tau \; | \; T=r \bigg) \cdot \mathbf \, {P}_{m} \left( T=r \right) 
        \notag \\ &= 
        \sum_{r=\tau_c+1}^{\infty} \mathbf \, {P}_{m} \bigg(  \sum_{t=r+2}^{\tau} \log \frac{f(y_{j}(t)| (\hat{\theta}_{j}(t-1))}{f(y_{j}(t)|\theta^{(0)})} \geq -\log c  \notag \\&\qquad 
        \text{for some} \; r < \tau \bigg) \cdot \mathbf \, {P}_{m} \left( T=r \right) 
        \notag \\ &\leq 
        \sum_{r=\tau_c+1}^{\infty} \mathbf \, {P}_{m} \left( Z_r(\tau - r) \geq \frac{1}{c} \;\; \text{for some} \; r < \tau \right) \notag \\&\qquad \qquad \cdot \mathbf \, {P}_{m} \left( T=r \right)
        \notag \\ &\leq 
        \sum_{r=\tau_c+1}^{\infty} \, c \cdot {P}_{m} \left( T=r \right)
        \end{align}
        The last inequality arises from the fact that the event $\{T = r\}$ depends solely on samples up to $t = r$, making it independent of samples from $t = r + 1$ onward. Additionally, the inequality follows from Lemma 1 in \cite{robbins1972class} for non-negative martingales, applied to $Z_r(\tau - r)$, along with the property that ${{\textbf{E}}}_{m} \left[Z_r\left( 1 \right)\right] = 1$ for all $j \neq m$ and fixed $r$. Finally, note that $T \leq \tau_{EST}$. Therefore, we can apply Lemma 1 and obtain:
        \begin{align} 
        \alpha _{m,j} ^{MD}&\leq         \sum_{r=\tau_c+1}^{\infty} c \cdot {P}_{m} \left( T=r \right) \leq \sum_{r=\tau_c}^{\infty} c \cdot {P}_{m} \left( T>r \right)
        \notag \\ &\leq 
        \sum_{r=\tau_c}^{\infty} c \cdot {P}_{m} \left( \tau_{EST}>r \right) \leq \sum_{n=0}^{\infty} c \cdot {P}_{m} \left( n_{EST}>n \right)
        \notag \\ &= O(c) = O\left(c \cdot (- \log c)^{1-\delta}\right),
        \end{align}
        which completes the proof of the lemma.
    \end{proof}

Next, we complete the proof of the theorem. From Lemma 1, we have $\mathbf{E}_{m}[n_{EST}] \leq O(1)$, and from Lemma 2, we have $\mathbf{E}_{m}[n_{U}] \leq -(1+o(1))\frac{\log(c)}{D(\theta^{(1)}||\theta^{(0)})} \vphantom{\bigg[}$. 

The detection time $\tau$ under the SCPA algorithm is upper bounded by $\tau_{U}$. Therefore, $\tau - \tau_c \leq \tau_U - \tau_c = n_U + n_{EST}$. Combining all together, we obtain:
\begin{align}
\label{eq: proof of theorem 2}
    \mathbf{E}_{m}[\tau-\tau_c] \leq -(1+o(1)) \frac{\log(c)}{D(\theta^{(1)}||\theta^{(0)})}.
\end{align}
To upper bound the Bayes risk under the SCPA algorithm, we apply the bound on the error by Lemma 3, and (\ref{eq: proof of theorem 2}) to have 
\begin{align}
    R_m(\Gamma) \leq -(1+o(1)) \frac{c \log(c)}{D(\theta^{(1)}||\theta^{(0)})}.
\end{align}
Lastly, we apply the lower bound on the Bayes risk under simple hypotheses for any algorithm \cite{cohen2015active}: $R_m(\Gamma) \geq -(1+o(1)) \frac{c \log(c)}{D(\theta^{(1)}||\theta^{(0)})}$. Combining the upper and lower bounds concludes the proof.

\subsection{Proof of Theorem 1}
\label{subsection proof of theorem 1}
In this section, we extend the proof for the case when no additional information on the parameters is given, and for all $m$, $\theta_m$ is unknown. Without loss of generality, we prove the theorem when hypothesis $m$ is true. 


To bound the detection time, we define $\tau_{U}$ and $n_{U}$ as in Definition 3, where $\tau_U \triangleq \inf \{ n > \tau_{EST} : S_m(n) \geq -\log(c) \}$ and $n_U \triangleq \tau_U - \tau_{EST}$. Additionally, we introduce the following definition:  

\begin{definition}
For any $\varphi \in \Theta^{(0)}$, $\tau_{U}(\varphi)$ is the smallest integer satisfying $\sum_{t=T+2}^{n} \ell^{(\theta^{(1)},\varphi)}_{m}(t) \geq -\log(c)$ for $n > \tau_{EST}$. Specifically,  
\begin{align}
    \tau_{U}(\varphi) \triangleq \inf \{ n > \tau_{EST} : \sum_{t=T+2}^{n} \ell^{(\theta^{(1)},\varphi)}_{m}(t) \geq -\log(c) \}.
\end{align}
Also, let $n_{U}(\varphi) \triangleq \tau_U(\varphi) - \tau_{EST}(\varphi)$ denote the total amount of time between $\tau_{EST}$ and $ \tau_U(\varphi)$.  
\end{definition}

Note that for all $n>\tau_{EST}$, $S_m(n) = \sum_{t=T+2}^{n} \ell^{(\theta^{(1)},\theta^{(0)})}_{m}(t)$ and $n_U = \max_{\varphi \in \Theta^{(0)}} \{n_U(\varphi)\} $. The following lemma bounds $n_U(\varphi)$ for all $\varphi \in \Theta^{(0)}$.

\begin{lemma}
    Assume that SCPA is implemented indefinitely. Then, for every $\varphi \in \Theta^{(0)}$ and for every fixed $\epsilon>0$ there exist $C>0$ and $\gamma>0$ such that 
    \begin{align}
    \label{eq lemma 3}
        &\mathbf {P}_m\left(n_{U}(\varphi) >n\right)\leq C e^{-\gamma n} \notag \\ &\quad\hspace{2cm}\forall n >-(1+\epsilon)\log c/D(\theta ^{(1)}) \;.
    \end{align}
    \begin{proof}
    As in the first case, we define:
    \begin{align}
        \Tilde{\ell}^{(\theta^{(1)},\varphi)}_{m}(t) \triangleq \ell^{(\theta^{(1)},\varphi)}_{m}(t) - D(\theta^{(1)}||\varphi).
    \end{align}
    Again, under hypothesis $H_m$, $\Tilde{\ell}_m(t)$ has zero mean for all $t>\tau_{EST}$. Let $\epsilon_1 = D(\theta^{(1)}||\varphi)\epsilon (1+\epsilon)>0$ and as in (\ref{lemma 2 proof eq 1}), we have:
    \begin{align}
        &\sum_{t=T+2}^{\tau_{EST}+n} \ell^{(\theta^{(1)},\varphi)}(t) + \log c \hphantom{TTTTTTTTTTTTTTTTTTTT} \notag \\
        & \geq \sum_{t=T+2}^{\tau_{EST}} \ell^{(\theta^{(1)},\varphi)}(t) + 
        \sum_{t=\tau_{EST}+1}^{\tau_{EST}+n} \Tilde{\ell}^{(\theta^{(1)},\varphi)}(t) + n\epsilon_1 ,
    \end{align}
    for all $n>-(1+\epsilon)\log c / D(\theta^{(1)}||\varphi)$.
    Since $D(\theta^{(1)}||\varphi)\geq D(\theta^{(1)})$ for all $\varphi \in \Theta^{(0)}$, we can prove (\ref{eq lemma 3}) using the same steps as in the proof of Lemma 2.
    \end{proof}
\end{lemma}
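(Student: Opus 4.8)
The plan is to reproduce the argument of Lemma~2 almost verbatim, replacing the known null parameter $\theta^{(0)}$ by the generic competitor $\varphi \in \Theta^{(0)}$ and tracking how the divergence $D(\theta^{(1)}||\varphi)$ enters. First I would introduce the normalized increment $\tilde\ell^{(\theta^{(1)},\varphi)}_m(t) \triangleq \ell^{(\theta^{(1)},\varphi)}_m(t) - D(\theta^{(1)}||\varphi)$. Under $H_m$ and for every $t>\tau_{EST}$ the observation $y_m(t)$ is drawn from $f(\cdot\,|\,\theta^{(1)})$ while, by the definition of $\tau_{EST}$, the adaptive estimate has locked onto $\theta^{(1)}$, so on that range the fixed-$\theta^{(1)}$ numerator is legitimate and $\tilde\ell^{(\theta^{(1)},\varphi)}_m(t)$ is i.i.d.\ with zero mean. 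Since $\{n_U(\varphi)>n\}\subseteq\{\sum_{t=T+2}^{\tau_{EST}+n}\ell^{(\theta^{(1)},\varphi)}_m(t)\le-\log c\}$, it suffices to bound the probability of the latter event.

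Setting $\epsilon_1 = D(\theta^{(1)}||\varphi)\,\epsilon/(1+\epsilon)>0$ and decomposing the cumulative statistic at $\tau_{EST}$ exactly as in~(\ref{lemma 2 proof eq 1}), the post-change drift contributes $n\,D(\theta^{(1)}||\varphi)+\log c \ge n\epsilon_1$ whenever $n> -(1+\epsilon)\log c/D(\theta^{(1)}||\varphi)$. This forces the non-crossing event into $\{\sum_{t=T+2}^{\tau_{EST}}\ell^{(\theta^{(1)},\varphi)}_m(t) + \sum_{\tau_{EST}+1}^{\tau_{EST}+n}\tilde\ell^{(\theta^{(1)},\varphi)}_m(t)\le -n\epsilon_1\}$, which I would split by a union bound into a \emph{pre-convergence} tail $\{\sum_{t=T+2}^{\tau_{EST}}\ell^{(\theta^{(1)},\varphi)}_m(t)\le-n\epsilon_1/2\}$ and a \emph{post-convergence} tail $\{\sum_{\tau_{EST}+1}^{\tau_{EST}+n}\tilde\ell^{(\theta^{(1)},\varphi)}_m(t)\le-n\epsilon_1/2\}$, bounding each separately.

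The post-convergence tail is a large-deviation estimate for a sum of i.i.d.\ zero-mean increments, and a Chernoff/MGF argument yields a clean $e^{-\gamma n}$ decay. The pre-convergence tail is the delicate part and is handled precisely as in Lemma~2: I would condition on $n_{EST}$, whose tail is controlled by Lemma~1 through the $Ce^{-\gamma\sqrt{n}}$ bound, and then partition according to whether the last exit time $T$ from Phase~1 satisfies $T\ge\tau_c$ or $T<\tau_c$. In the former case $\tau_{EST}-T\le n_{EST}$ is small, so a Chernoff bound on a short sum suffices; in the latter case the pre-change observations generate negative-drift MLE increments, and a Chernoff bound on $\sum_{t=T+1}^{\tau_c}\ell^{(\theta^{(0)},\hat\theta_m(\tau_c))}_m(t)<0$ gives exponential decay in $\tau_c-T$.

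The main obstacle is the same as in Lemma~2: the pre-convergence partial sum mixes a random number of summands whose estimates have not yet settled with a random starting index $T$ that may straddle the change point, so it cannot be treated as a plain i.i.d.\ sum and instead requires the conditioning/case-split scaffolding built on Lemma~1. Once both tails are bounded, the last point is the uniformity of the stated range: since $D(\theta^{(1)}||\varphi)\ge D(\theta^{(1)})=\min_{\varphi'\in\Theta^{(0)}}D(\theta^{(1)}||\varphi')$ for every $\varphi\in\Theta^{(0)}$, the per-$\varphi$ threshold $-(1+\epsilon)\log c/D(\theta^{(1)}||\varphi)$ is at most $-(1+\epsilon)\log c/D(\theta^{(1)})$, so the bound in~(\ref{eq lemma 3}) holds for all $n$ exceeding the uniform threshold. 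This uniformity is exactly what later permits taking the maximum over the finite set $\Theta^{(0)}$ to control $n_U=\max_{\varphi}n_U(\varphi)$.
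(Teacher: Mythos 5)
Your proposal matches the paper's proof essentially step for step: the paper likewise introduces the normalized increment $\tilde\ell^{(\theta^{(1)},\varphi)}_m(t)=\ell^{(\theta^{(1)},\varphi)}_m(t)-D(\theta^{(1)}||\varphi)$, sets $\epsilon_1$ proportional to $D(\theta^{(1)}||\varphi)\epsilon/(1+\epsilon)$, decomposes the cumulative statistic at $\tau_{EST}$ exactly as in the displayed inequality of Lemma~2, and closes with the observation $D(\theta^{(1)}||\varphi)\ge D(\theta^{(1)})$ to obtain the bound uniformly over the stated range of $n$ (enabling the later union bound over the finite set $\Theta^{(0)}$). The only difference is presentational: you explicitly unpack the Lemma~2 machinery (the pre-/post-convergence split, the Lemma~1 tail on $n_{EST}$, and the $T\ge\tau_c$ versus $T<\tau_c$ case analysis) that the paper compresses into the phrase ``using the same steps as in the proof of Lemma 2.''
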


Next, we show that the error probability is $O\left(c \cdot (- \log c)^{1-\delta}\right)$. Similar to the previous case, it suffices to prove that $\alpha_{m,j}^{MD} \,,\,  \alpha_{m,j}^{FA} = O\left(c \cdot (- \log c)^{1-\delta}\right)$ for all $j \neq m$ and $j$, respectively. Note that for $j \neq m$, $\theta_j \in \Theta^{(0)}$ and for $\tau<\tau_c$ , $\theta_j \in \Theta^{(0)}$ for all $j$. Therefore,
\begin{align}
    & S_{j}(\tau) = \sum_{t=T+2}^{\tau} \frac{f(y_j(t)|\hat{\theta}_j(t-1))}{f(y_j(t)|\hat{\theta}^{(0)}_j(\tau))}    
    \notag \\ 
    & \hspace{0.85cm}=\min_{\varphi \in \Theta^{(0)}} \sum_{t=T+2}^{\tau} \frac{f(y_j(t)|\hat{\theta}_j(t-1))}{f(y_j(t)|\varphi)} \notag \\ 
    & \hspace{0.85cm}\leq \sum_{t=T+2}^{\tau} \frac{f(y_j(t)|\hat{\theta}_j(t-1))}{f(y_j(t)|\theta_j)}. 
\end{align}
Hence, with the same notations as in the previous case for fixed $T$ , $Z_T(\tau-T) \triangleq e^{{S_{j}(\tau)}} = \prod_{t=T+2}^{\tau} \frac{f(y_{j}(t)|\hat{\theta}_{j}(t-1))}{f(y_{j}(t)|\theta_j)} \;,$ we have:
\begin{align}
    \alpha_{m,j}^{FA} &= \displaystyle \mathbf {P}_{m}\left( \delta=j \,,\, \tau \leq \tau_c \right)
    \notag \\ & =\displaystyle \mathbf {P}_{m}\left (S_{j}(\tau )\geq -\log c \;\; \text{for some} \; \tau \leq \tau_c \right ) 
    \notag \\ & \leq \sum_{T=1}^{\tau_c} \displaystyle \mathbf {P}_{m}\bigg( \sum_{t=T+2}^{\tau} \frac{f(y_j(t)|\hat{\theta}_j(t-1))}{f(y_j(t)|\theta_j)}\geq -\log c \;\; \notag \\ & \hspace{3cm}\text{for some} \; T<\tau \bigg)
    \notag \\ & \leq \sum_{T=1}^{\tau_c} \displaystyle \mathbf {P}_{m}\left (Z_T(\tau-T) \geq \frac{1}{c} \;\; \text{for some} \; T<\tau \right ) 
    \notag \\  & \leq \sum_{T=1}^{\tau_c} c \, \mathbf{E}_{m} [Z_T(1)]  = c \cdot \tau_c \leq c \cdot \left( -\log c \right)^{1-\delta},
\end{align}
where again we used Lemma 1 in \cite{robbins1972class} for a non-negative martingale, $Z_T(\tau-T)$. The last inequality follows due to $\mathbf {E}_{m} [Z_T(1)]=1$. The same holds for $\alpha_{m,j}^{MD}$, which proves the error bound.

To complete the proof, note that $n_U>n$ implies that there exists $\varphi \in \Theta^{(0)}$ with $n_U(\varphi)>n$, such that
\begin{align}
    \label{bounding n_U for case 2}
    &\mathbf {P}_m\left( n_U>n \right) \leq  \sum_{\varphi \in \Theta^{(0)}} \mathbf {P}_m\left( n_U(\varphi)>n \right).
\end{align}
Recall that $\Theta^{(0)}$ is a finite space. Therefore, using Lemma 4, for every $\epsilon>0$ we can find $C>0$ and $\gamma>0$ such that for all $\varphi  \in \Theta^{(0)}$  we have: 
\begin{align}
    \label{lemma 2 in case 2}
    \mathbf {P}_m\left( n_U(\varphi)>n \right) \; , \; \forall n>-(1+\epsilon) \log c/D(\theta^{(1)}).
\end{align}
Using (\ref{lemma 2 in case 2}), we obtain that (\ref{bounding n_U for case 2}) is bounded by:
\begin{align}
    \Tilde{C}e^{-\gamma \sqrt{n}} \; , \; \forall  n>-(1+\epsilon) \log c/D(\theta^{(1)}),
\end{align}
for suitable $\Tilde{C}>0$. Since $\tau-\tau_c \leq n_{EST} + n_{U}$, using the bound on $n_{EST}$ and $n_U$, we have:
    \begin{align}
    \label{bound on detection time second case}
        \mathbf{E}_{m}[\tau-\tau_c] \leq -(1+o(1)) \frac{\log(c)}{D(\theta^{(1)})}.
    \end{align}
    Finally, combining the error bound and (\ref{bound on detection time second case}) yields:
    \begin{align}
        \label{bound on the Bayes risk second case}
        R_m(\Gamma) \leq -(1+o(1)) \frac{c \log(c)}{D(\theta^{(1))})},
    \end{align}
 which, together with the lower bound on the Bayes risk, completes the proof.

\bibliographystyle{ieeetr}
%

\end{document}